\newcommand{\calD}{\mathcal{D}}
\newcommand{\calX}{\mathcal{X}}
\newcommand{\calY}{\mathcal{Y}}
\newcommand{\caltheta}{\Theta}
\newcommand{\bbR}{\mathbb{R}}
\newcommand{\bbP}{\mathbb{P}}
\newcommand{\bbE}{\mathbb{E}}
\newcommand{\bftheta}{\boldsymbol{\theta}}
\newcommand{\bfepsilon}{\boldsymbol{\epsilon}}
\newcommand{\bff}{\boldsymbol{f}}
\newcommand{\bfh}{\boldsymbol{h}}
\newcommand{\bfg}{\boldsymbol{g}}
\newcommand{\Loracle}{L^{\text{oracle}}}
\newcommand{\Lsam}{L^{\text{sam}}}
\newcommand{\Loverall}{L^{\text{overall}}}
\newcommand{\GAM}{R^{(1)}}
\newcommand{\sam}{R^{(0)}}
\DeclareMathOperator*{\argmax}{arg\,max}
\newcommand\blfootnote[1]{%
  \begingroup
  \renewcommand\thefootnote{}\footnote{#1}%
  \addtocounter{footnote}{-1}%
  \endgroup
}
\newtheorem{theorem}{Theorem}[section]
\newtheorem{lemma}[theorem]{Lemma}
\newtheorem{proposition}[theorem]{Proposition}
\theoremstyle{definition}
\newtheorem{definition}{Definition}[section]
\newtheorem*{remark}{Remark}
\newtheorem*{property*}{Property}
\newenvironment{itm}
{\begin{itemize}[leftmargin=*,noitemsep,topsep=0pt,parsep=0pt,partopsep=0pt]}
{\end{itemize}}
\crefname{section}{Sec.}{Secs.}
\Crefname{section}{Section}{Sections}
\Crefname{table}{Table}{Tables}
\crefname{table}{Tab.}{Tabs.}
\def\maketag@@@#1{\hbox{\m@th\normalfont\normalsize#1}}
\begin{document}

\title{Gradient Norm Aware Minimization Seeks First-Order Flatness and Improves Generalization}

\author{Xingxuan Zhang$^{\dag}$, Renzhe Xu$^{\dag}$, Han Yu, Hao Zou, Peng Cui*\\
Department of Computer Science, Tsinghua University\\
{\tt\small {xingxuanzhang@hotmail.com, xrz199721@gmail.com 
}} \\
{\tt\small {yuh21@mails.tsinghua.edu.cn,
zouh18@mails.tsinghua.edu.cn,
cuip@tsinghua.edu.cn}}}
\maketitle


\begin{abstract}
Recently, flat minima are proven to be effective for improving generalization and sharpness-aware minimization (SAM) achieves state-of-the-art performance. Yet the current definition of flatness discussed in SAM and its follow-ups are limited to the zeroth-order flatness (i.e., the worst-case loss within a perturbation radius). We show that the zeroth-order flatness can be insufficient to discriminate minima with low generalization error from those with high generalization error both when there is a single minimum or multiple minima within the given perturbation radius. Thus we present first-order flatness, a stronger measure of flatness focusing on the maximal gradient norm within a perturbation radius which bounds both the maximal eigenvalue of Hessian at local minima and the regularization function of SAM.  
We also present a novel training procedure named Gradient norm Aware Minimization (GAM) to seek minima with uniformly small curvature across all directions. Experimental results show that GAM improves the generalization of models trained with current optimizers such as SGD and AdamW on various datasets and networks. Furthermore, we show that GAM can help SAM find flatter minima and achieve better generalization. The code is available at \url{https://github.com/xxgege/GAM}.

\end{abstract}

\blfootnote{$\dag$Equal contribution, *Corresponding author}

\section{Introduction}
\vspace{-5pt}
Current neural networks have achieved promising results in a wide range of fields \cite{ren2015faster,kipf2016semi,simonyan2014very,zhang2022towards2,young2018recent,zhang2021deep,zhou2021deepvit,zhang2019spatio}, yet they are typically heavily over-parameterized \cite{allen2019convergence,arora2018optimization}. 
Such heavy overparameterization leads to severe overfitting and poor generalization to unseen data when the model is learned simply with common loss functions (e.g., cross-entropy) \cite{izmailov2018averaging}.
Thus effective training algorithms are required to limit the negative effects of overfitting training data and find generalizable solutions.

\begin{figure}[t]
    \centering
    \begin{subfigure}[b]{1\linewidth}
        \centering
        \includegraphics[width=\linewidth]{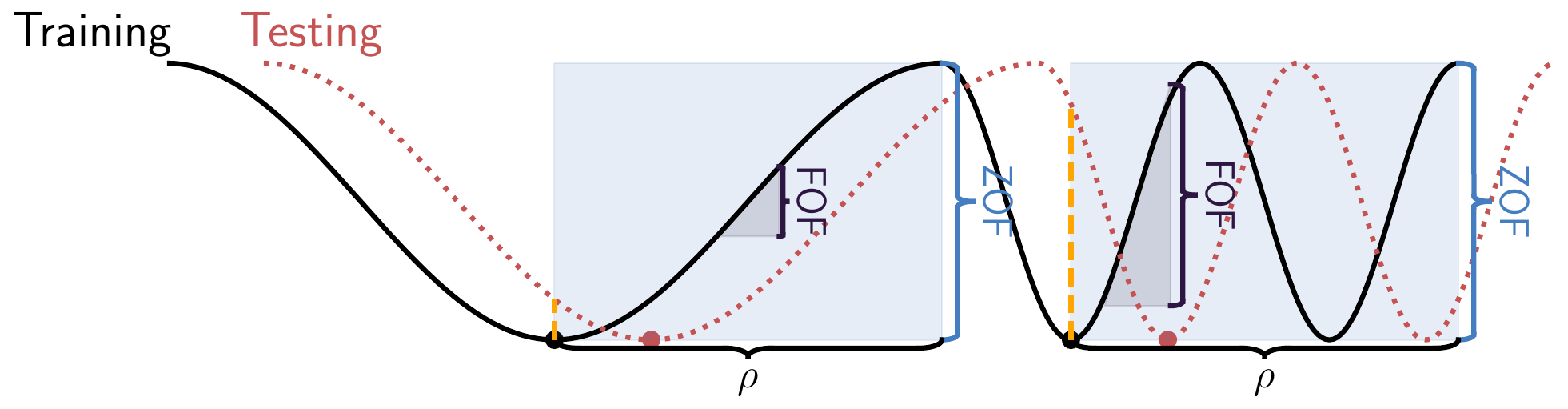}
        \caption{}
        \vspace{-2pt}
        \label{fig:sam-vs-GAM-1}
    \end{subfigure}
    \begin{subfigure}[b]{1\linewidth}
        \centering
        \includegraphics[width=\linewidth]{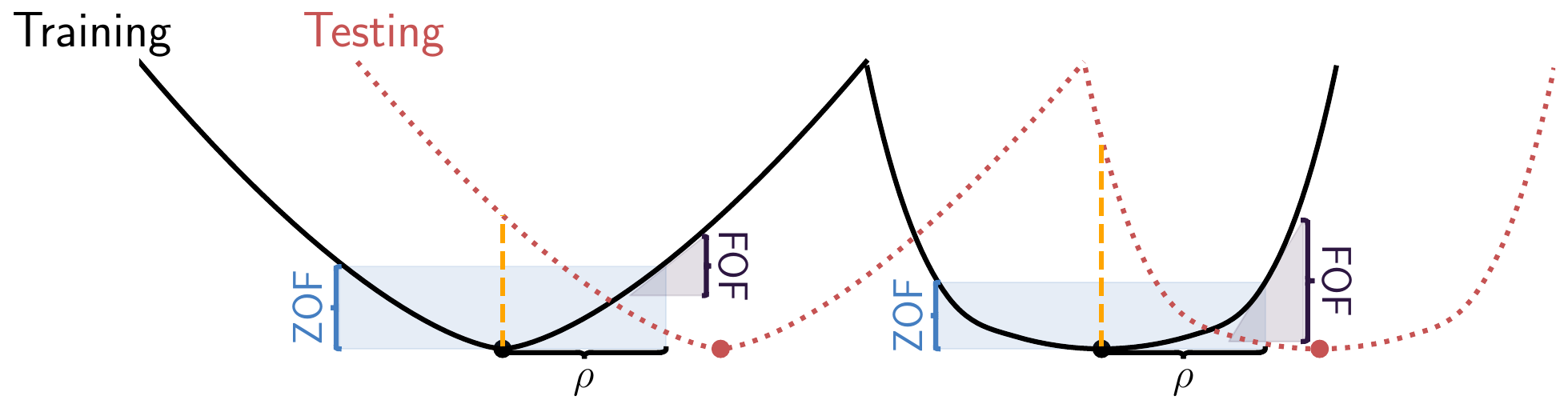}
        \caption{}
        \label{fig:sam-vs-GAM-2}
    \end{subfigure}
    \vspace{-20pt}
    \caption{The comparison of the zeroth-order flatness (ZOF) and first-order flatness (FOF). Given a perturbation radius $\rho$, ZOF can fail to indicate generalization error both when there are multiple minima (\ref{fig:sam-vs-GAM-1}) and a single minimum (\ref{fig:sam-vs-GAM-2}) in the radius while FOF remains discriminative. The height of blue rectangles in curly brackets is the value of ZOF and the height of gray triangles (which indicates the slope) is the value of FOF. In Figure \ref{fig:sam-vs-GAM-1}, when $\rho$ is large and enough to cover multiple minima, ZOF can not measure the fluctuation frequency while FOF prefers the flatter valley which has a smaller gradient norm. When $\rho$ is small and covers only a single minimum, the maximum loss in $\rho$ can be misleading as it can be misaligned with the uptrend of loss. As shown in Figure \ref{fig:sam-vs-GAM-2}, ZOF prefers the valley on the right, which has a larger generalization error (the orange dotted line), while FOF prefers the left one.}
    \label{fig:intuition}
    \vspace{-15pt}
\end{figure}

Many studies try to improve model generalization by modifying the training procedure, such as batch normalization \cite{ioffe2015batch}, dropout \cite{hinton2012improving}, and data augmentation \cite{zhang2018mixup,yun2019cutmix,cubuk2020randaugment}. Especially, some works discuss the connection between the geometry of the loss landscape and generalization \cite{izmailov2018averaging,he2019asymmetric,foret2021sharpness}. 
A branch of effective approaches, sharpness-Aware Minimization (SAM) \cite{foret2021sharpness} and its variants \cite{du2021efficient,liu2022towards, du2022sharpness,mi2022make,zhong2022improving,kim2022sharpness}, minimizes the worst-case loss within a perturbation radius, which we call zeroth-order flatness. It is proven that optimizing the zeroth-order flatness leads to lower generalization error and achieves state-of-the-art performance on various image classification tasks \cite{foret2021sharpness,zhuang2022surrogate,kwon2021asam}.

Optimizing the worst case, however, relies on a reasonable choice of perturbation radius $\rho$. As a prefixed hyperparameter in SAM or a hyperparameter under parameter re-scaling in its variants, such as ASAM \cite{kwon2021asam}, $\rho$ can not always be a perfect choice in the whole training process. 
We show that the zeroth-order flatness may fail to indicate the generalization error with a given $\rho$. As in Figure \ref{fig:sam-vs-GAM-1}, when $\rho$ covers multiple minima, the zeroth-order flatness (SAM) can not measure the fluctuation frequency. When there is a single minimum within $\rho$, as in Figure \ref{fig:sam-vs-GAM-2} the observation radius is limited and the maximum loss in $\rho$ can be misaligned with the uptrend of loss. So zeroth-order flatness can be misleading and the knowledge of loss gradient is required for generalization error minimization.

To address this problem, we introduce first-order flatness, which controls the maximum gradient norm in the neighborhood of minima. 
We show that the first-order flatness is stronger than the zeroth-order flatness as the loss intensity of the loss fluctuation can be bounded by the maximum gradient. When the perturbation radius covers multiple minima, which we show is quite common in practice, the first-order flatness discriminates more drastic jitters from real flat valleys, as in Figure \ref{fig:sam-vs-GAM-1}. When the perturbation radius is small and covers only one minimum, the first-order flatness demonstrates the trend of loss gradient and can help indicate generalization error. We further show that the first-order flatness directly controls the maximal eigenvalue of Hessian of the training loss, which is a proper sharpness/flatness measure indicating the loss uptrend under an adversarial perturbation to the weights  \cite{DBLP:conf/iclr/KeskarMNST17,jiang2019fantastic,kaur2022maximum}. 

To optimize the first-order flatness in deep model training, we propose Gradient norm Aware Minimization (GAM), which approximates the maximum gradient norm with stochastic gradient ascent and Hessian-vector products to avoid the materialization of the Hessian matrix. 

We summarize our contributions as follows.
\begin{itm}
    \item We present first-order flatness, which measures the largest gradient norm in the neighborhood of minima. We show that the first-order flatness is stronger than current zeroth-order flatness and it controls the maximum eigenvalue of Hessian.
    \item We propose a novel training procedure, GAM, to simultaneously optimize prediction loss and first-order flatness. We analyze the generalization error and the convergence of GAM.  
    \item We empirically show that GAM considerably improves model generalization when combined with current optimizers such as SGD and AdamW across a wide range of datasets and networks. We show that GAM further improves the generalization of models trained with SAM.
    \item We empirically validate that GAM indeed finds flatter optima with lower Hessian spectra.
\end{itm}

\section{Related Works}
\paragraph{Optimizer}
Some studies \cite{xie2022adaptive,foret2021sharpness} have demonstrated that current optimization approaches, such as SGD \cite{nesterov1983method}, Adam \cite{DBLP:journals/corr/KingmaB14}, AdamW \cite{loshchilov2017decoupled} and others \cite{duchi2011adaptive,DBLP:conf/iclr/LiuJHCLG020} affect generalization.
Some previous literature finds that Adam is more vulnerable to sharp minima than SGD \cite{wilson2017marginal}, which results in worse generalization ability \cite{xie2022power, hardt2016train, hochreiter1994simplifying}.
Some following works \cite{luo2019adaptive,chen2018closing,xie2022adaptive,zaheer2018adaptive} propose generalizable optimizers to address this problem. 
However, it can be a trade-off between generalization ability and convergence speed \cite{DBLP:conf/iclr/KeskarMNST17,xie2022adaptive,DBLP:conf/iclr/LiuJHCLG020,zaheer2018adaptive,duchi2011adaptive}. 
Different tasks and network architectures may agree with different optimizers (e.g., SGD is often chosen for ResNet \cite{he2016deep} while AdamW \cite{loshchilov2017decoupled} for ViTs \cite{dosovitskiy2020image}). Thus selecting a proper optimizer is critical while the understanding of its relationship to model generalization remains nascent \cite{foret2021sharpness}.

\paragraph{Flat Minima and Generalization}
Many recent works show that flatter minima lead to better generalization \cite{DBLP:conf/iclr/KeskarMNST17,DBLP:conf/iclr/KeskarMNST17,zhuang2022surrogate,jia2020information,petzka2021relative}.
Recently, \cite{kaur2022maximum} thoroughly reviews the literature related to generalization and sharpness of minima. It highlights the role of maximum Hessian eigenvalue in deciding the sharpness of minima \cite{DBLP:conf/iclr/KeskarMNST17, wen2019empirical}. And there also have been several simple strategies to achieve a smaller maximum Hessian eigenvalue, such as choosing a large learning rate \cite{lewkowycz2020large, DBLP:conf/iclr/CohenKLKT21, DBLP:conf/iclr/JastrzebskiSFAT20} and smaller batch size \cite{DBLP:conf/iclr/SmithL18, lewkowycz2020large, jastrzkebski2017three}. 
Sharpness-Aware Minimization (SAM) \cite{foret2021sharpness} and its variants \cite{zhuang2022surrogate, kwon2021asam, du2021efficient,liu2022towards, du2022sharpness,mi2022make,zhong2022improving,kim2022sharpness} are representative training algorithm to seek flat minima for better generalization. However, their definition of flatness is limited to zeroth-order flatness. In this paper, we present first-order flatness, a stronger flatness measure to learn better generalization. 
It is shown that discrete steps of gradient descent regularize deep models implicitly by penalizing the gradient descent trajectories with large loss gradients and this implicit regularization helps to find ﬂat minima \cite{barrett2020implicit}. 
\cite{zhao2022penalizing} proposes to directly control the gradient norm. They focus on the gradient norm at each training step, while we propose to penalize the maximum gradient norm in the neighborhood of minima and show the connection between our regularizer and the largest eigenvalue of Hessian and generalization error.

\section{Preliminaries}
\paragraph{Notations} Let $\calX$ and $\calY$ be the sample space and label space, respectively. Let $\calD$ denote the training distribution on $\calX \times \calY$ and $S = \{(x_i, y_i)\}_{i=1}^n$ denote the training dataset with $n$ data-points drawn independently from $\calD$. Let $\bftheta \in \caltheta \subseteq \bbR^d$ denote the parameters of the model. In addition, we use $B(\bftheta, \rho)$ to denote the open ball of radius $\rho > 0$ centered at the point $\bftheta$ in the Euclidean space, \textit{i.e.}, $B(\bftheta, \rho) = \{\bftheta': \|\bftheta - \bftheta'\| < \rho\}$\footnote{We use $\|\cdot\|$ to denote the L2 norm throughout the paper.}.

Let $\ell: \caltheta \times \calX \times \calY \rightarrow \mathbb{R}$ be the per-data-point loss function. Let $\hat{L}(\bftheta) = \sum_{i=1}^n \ell(\bftheta, x_i, y_i)$ and $L(\bftheta) = \bbE_{(x, y) \sim \calD}[\ell(\bftheta, x, y)]$ denote the empirical loss function and population-level loss function, respectively. We assume $\hat{L}(\bftheta)$ and $L(\bftheta)$ are twice differentiable throughout the paper. $\nabla L(\bftheta)$ and $\nabla^2 L(\bftheta)$ ($\nabla \hat{L}(\bftheta)$ and $\nabla^2 \hat{L}(\bftheta)$) are the derivative and Hessian matrix of the function $L(\cdot)$ ($\hat{L}(\cdot)$) at point $\bftheta$, respectively. Besides, for any $\bftheta \in \caltheta$, we use $\nabla \|\nabla \hat{L}(\bftheta)\|$ to represent the gradient of function $\|\nabla \hat{L}(\cdot)\|$ at point $\bftheta$. In addition, we use $\Loracle(\bftheta)$ to denote an oracle loss function and it can be chosen as empirical loss function $\hat{L}(\bftheta)$, $\hat{L}(\bftheta)$ with the weight decay regularization, and other common loss functions.

\subsection{Zeroth-Order Flatness}
The most popular mathematical definitions of flatness considers the maximal loss value within a raduis \cite{DBLP:conf/iclr/KeskarMNST17,foret2021sharpness}, which we call the zeroth-order flatness. We follow the loss function proposed in SAM: 
\begin{equation} \label{eq:sam-loss}
\small
    \Lsam(\bftheta) = \hat{L}(\bftheta) + \max_{\bftheta' \in B(\bftheta, \rho)} \left(\hat{L}(\bftheta') - \hat{L}(\bftheta)\right).
\end{equation}

The second term in the right-hand side of Equation \eqref{eq:sam-loss} can be considered as a measure of the zeroth-order flatness.

\begin{definition}[$\rho$-zeroth-order flatness] \label{defn:zeroth-order}
    For any $\rho > 0$, the $\rho$-zeroth-order flatness $\sam_{\rho}(\bftheta)$ of function $\hat{L}(\bftheta)$ at a point $\bftheta$ is defined as
    \begin{equation} \label{eq:sam}
    \small
        \sam_{\rho}(\bftheta) \triangleq \max_{\bftheta' \in B(\bftheta, \rho)} \left(\hat{L}(\bftheta') - \hat{L}(\bftheta)\right), \quad \forall \bftheta \in \caltheta.
    \end{equation}
    Here $\rho$ is the perturbation radius that controls the magnitude of the neighborhood.
\end{definition}

Intuitively, we name the term zeroth-order flatness because it measures the gap between the maximum loss value and the current point. As a measure of accumulation of gradients, zeroth-order flatness can be insufficient to indicate the generalization loss as shown in Section \ref{sect:GAM-vs-sam}.
In this paper, we propose a novel first-order flatness measure and compare these two flatness notions in Section \ref{sect:GAM-vs-sam}.
\section{First-order Flatness and Optimization} \label{sect:method}
In this section, we introduce the first-order flatness and the corresponding minimizer for optimization. In Section \ref{sect:method-motivation}, we formulate the first-order flatness and show its connection with the maximal eigenvalue of the Hessian.
Afterward, we discuss the relationship between the zeroth-order and first-order flatness in Section \ref{sect:GAM-vs-sam}. In Section \ref{sect:method-GAM+sgd}, we present the optimization framework based on the first-order flatness as shown in Algorithm \ref{alg:GAM}. We further provide a generalization bound with respect to the empirical loss, the first-order flatness, and high order terms, indicating that optimizing the first-order flatness improves generalization abilities. We then prove the convergence of the algorithm.

\subsection{First-order Flatness} \label{sect:method-motivation}
We first introduce the formulation of the first-order flatness, which measures the maximal gradient norm in the neighbourhood of a point $\bftheta \in \caltheta$.
\begin{definition}[$\rho$-first-order flatness] \label{defn:first-order}
    For any $\rho > 0$, the $\rho$-first-order flatness $\GAM_{\rho}(\bftheta)$ of function $\hat{L}(\bftheta)$ at a point $\bftheta$ is defined as
    \begin{equation} \label{eq:GAM}
        \GAM_{\rho}(\bftheta) \triangleq \rho \cdot \max_{\bftheta' \in B(\bftheta, \rho)} \left\|\nabla \hat{L}(\bftheta')\right\|, \quad \forall \bftheta \in \caltheta.
    \end{equation}
    Here $\rho$ is the perturbation radius that controls the magnitude of the neighbourhood.
\end{definition}

Intuitively, the first-order flatness entails that the loss function $\hat{L}(\bftheta)$ should not change drastically in the neighbourhood of $\bftheta$ so that the largest gradient norm of loss is constrained.

We then discuss the relationship between the first-order flatness and  the maximal eigenvalue of the Hessian matrix $\nabla^2 \hat{L}(\bftheta^*)$ (denoted as $\lambda_{\max} (\nabla^2 \hat{L}(\bftheta^*))$). $\lambda_{\max}$ is proven to be a proper measure of the curvature of minima \cite{DBLP:conf/iclr/KeskarMNST17,kaur2022maximum} and is closely related to generalization abilities \cite{jastrzkebski2017three,wen2019empirical,chen2021vision}. 
As another definition of flatness in related works \cite{lewkowycz2020large,chaudhari2019entropy}, $\lambda_{\max}$ is widely accepted yet hard to calculate. We show in the following lemma that given a radius $\rho$, the first-order flatness controls $\lambda_{\max}$, which reinforces the validity of the first-order flatness.

\begin{lemma} \label{lemma:eigen-radius}
    Let $\bftheta^*$ be a local minimum of $\hat{L}$. Suppose $\hat{L}$ can be second-order Taylor approximated in the neighbourhood $B(\bftheta^*, \rho)$\footnote{The second order Taylor approximation assumption is commonly adopted in optimization-related literature \cite{mandt2017stochastic,zhang2019algorithmic,xie2020diffusion,xie2022adaptive} to analyze the properties near critical points.}, \textit{i.e.}, $\forall \bftheta \in B(\bftheta^*, \rho)$, $\hat{L}(\bftheta) = \hat{L}(\bftheta^*) + (\bftheta - \bftheta^*)^\top \nabla^2 \hat{L}(\bftheta^*)(\bftheta - \bftheta^*) / 2$. Then
    \begin{equation}
    \small
        \lambda_{\max} \left(\nabla^2 \hat{L}(\bftheta^*)\right) = \frac{\GAM_{\rho}(\bftheta^*)}{\rho^2}.
    \end{equation}
\end{lemma}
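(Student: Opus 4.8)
The plan is to exploit the fact that under the stated second-order Taylor approximation the gradient of $\hat{L}$ is an explicit \emph{linear} map on $B(\bftheta^*,\rho)$, so that the maximization defining $\GAM(\bftheta^*)$ reduces to a linear-algebra problem solved by the spectral theorem.

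First I would note that since $\bftheta^*$ is a local minimum, $\nabla \hat{L}(\bftheta^*) = 0$, which is consistent with the absence of a linear term in the assumed expansion $\hat{L}(\bftheta) = \hat{L}(\bftheta^*) + (\bftheta-\bftheta^*)^\top \nabla^2 \hat{L}(\bftheta^*)(\bftheta-\bftheta^*)/2$. Differentiating this quadratic model gives, for every $\bftheta \in B(\bftheta^*,\rho)$,
\[
\nabla \hat{L}(\bftheta) = \nabla^2 \hat{L}(\bftheta^*)\,(\bftheta - \bftheta^*).
\]
Writing $H := \nabla^2 \hat{L}(\bftheta^*)$ and substituting $\bfv := \bftheta - \bftheta^*$, the inner quantity in Definition~\ref{defn:first-order} becomes $\sup_{\|\bfv\| < \rho} \|H\bfv\|$, so $\GAM(\bftheta^*) = \rho \cdot \sup_{\|\bfv\| < \rho} \|H\bfv\|$.

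Next I would invoke the spectral theorem for the symmetric matrix $H$. Because $\bftheta^*$ is a local minimum, $H \succeq 0$, so the operator norm of $H$ equals its largest eigenvalue $\lambda_{\max}(H) \ge 0$. Hence $\|H\bfv\| \le \lambda_{\max}(H)\,\|\bfv\| \le \rho\,\lambda_{\max}(H)$ for all $\bfv$ with $\|\bfv\| < \rho$, and choosing $\bfv = t\rho\, u_1$ for a unit top eigenvector $u_1$ and letting $t \uparrow 1$ shows the supremum equals exactly $\rho\,\lambda_{\max}(H)$. Combining, $\GAM(\bftheta^*) = \rho \cdot \rho\,\lambda_{\max}(H) = \rho^2 \,\lambda_{\max}\!\left(\nabla^2 \hat{L}(\bftheta^*)\right)$, and rearranging yields the claim.

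The computation is routine; the only points needing care are bookkeeping remarks rather than genuine obstacles: (i) one must make explicit that $H \succeq 0$ at a local minimum, so that the operator norm is $\lambda_{\max}$ and not $\max(|\lambda_{\max}(H)|,|\lambda_{\min}(H)|)$; and (ii) since $B(\bftheta^*,\rho)$ is an \emph{open} ball the maximum is attained only in the limit, so $\max$ here is read as a supremum (equivalently, one passes to the closed ball). I expect no substantive difficulty beyond these.
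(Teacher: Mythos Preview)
Your proposal is correct and follows essentially the same approach as the paper: differentiate the quadratic model to get $\nabla \hat{L}(\bftheta) = \nabla^2 \hat{L}(\bftheta^*)(\bftheta-\bftheta^*)$, then reduce the maximization to the operator norm of the Hessian and identify it with $\lambda_{\max}$. Your added remarks (that $H\succeq 0$ at a local minimum is needed so that $\|H\|=\lambda_{\max}(H)$ rather than the spectral radius, and that the open ball forces a supremum) are points the paper leaves implicit, so your write-up is slightly more careful but not a different argument.
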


Since the maximal eigenvalue of Hessian matrices is usually difficult to approximate and optimize directly \cite{yao2018hessian,yao2020pyhessian}, the first-order flatness becomes a proper surrogate of $\lambda_{\max}$. 

\subsection{Comparison with Zeroth-order Flatness} \label{sect:GAM-vs-sam}
We compare the first-order flatness with the zeroth-order flatness. We first show that $\sam_{\rho}(\bftheta)$ in Equation \eqref{eq:sam} is bounded by $\GAM_{\rho}(\bftheta)$ in Equation \eqref{eq:GAM}.

\begin{proposition} \label{prop:GAM-and-sam}
    For any $\bftheta \in \caltheta$, $\sam_{\rho}(\bftheta)$ is bounded by $\GAM_{\rho}(\bftheta)$, \textit{i.e.}, $\GAM_{\rho}(\bftheta) \ge \sam_{\rho}(\bftheta)$.
\end{proposition}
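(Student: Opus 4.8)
The plan is to control, for every $\bftheta' \in B(\bftheta, \rho)$, the loss increment $\hat{L}(\bftheta') - \hat{L}(\bftheta)$ by the largest gradient norm over the same ball, and then take a maximum over $\bftheta'$. The tool is the fundamental theorem of calculus along the straight segment joining $\bftheta$ and $\bftheta'$.

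First I would fix $\bftheta' \in B(\bftheta, \rho)$ and set $\gamma(t) = \bftheta + t(\bftheta' - \bftheta)$ for $t \in [0,1]$. Because $B(\bftheta, \rho)$ is convex and contains both its center $\bftheta$ and the point $\bftheta'$, the entire segment $\{\gamma(t): t \in [0,1]\}$ lies in $B(\bftheta, \rho)$; and since $\hat{L}$ is differentiable, the chain rule gives $\frac{d}{dt}\hat{L}(\gamma(t)) = \nabla\hat{L}(\gamma(t))^\top(\bftheta'-\bftheta)$, so that
\[ \hat{L}(\bftheta') - \hat{L}(\bftheta) = \int_0^1 \nabla\hat{L}(\gamma(t))^\top(\bftheta'-\bftheta)\,dt. \]

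Next I would bound the integrand by Cauchy--Schwarz, $\nabla\hat{L}(\gamma(t))^\top(\bftheta'-\bftheta) \le \|\nabla\hat{L}(\gamma(t))\|\cdot\|\bftheta'-\bftheta\|$, then use $\gamma(t) \in B(\bftheta,\rho)$ to replace $\|\nabla\hat{L}(\gamma(t))\|$ by $\max_{\bftheta''\in B(\bftheta,\rho)}\|\nabla\hat{L}(\bftheta'')\|$, together with $\|\bftheta'-\bftheta\| < \rho$. This yields $\hat{L}(\bftheta') - \hat{L}(\bftheta) \le \rho\max_{\bftheta''\in B(\bftheta,\rho)}\|\nabla\hat{L}(\bftheta'')\| = \GAM(\bftheta)$ for every $\bftheta'$, and taking the maximum over $\bftheta' \in B(\bftheta,\rho)$ on the left-hand side turns it into $\sam(\bftheta)$, giving $\GAM(\bftheta) \ge \sam(\bftheta)$.

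I do not expect any real obstacle. The only points worth a word of care are that the connecting segment stays inside the ball (immediate from convexity of $B(\bftheta,\rho)$), and that the suprema in the definitions of $\sam$ and $\GAM$ are nominally over an open ball — one can either write $\sup$ in place of $\max$ throughout, or invoke the paper's standing differentiability assumptions to pass to the closure without changing the argument.
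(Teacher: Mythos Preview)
Your proposal is correct and follows essentially the same route as the paper: bound $\hat{L}(\bftheta') - \hat{L}(\bftheta)$ along the segment from $\bftheta$ to $\bftheta'$, apply Cauchy--Schwarz, and replace the gradient norm by its maximum over $B(\bftheta,\rho)$. The only cosmetic difference is that the paper invokes the mean value theorem to produce a single intermediate point, whereas you use the integral form of the fundamental theorem of calculus; either variant yields the same bound immediately.
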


Thus a smaller $\GAM_{\rho}$ also leads to a smaller $\sam_{\rho}$, indicating that $\GAM_{\rho}$ is a stronger flatness measure than $\sam_{\rho}$. 
Proposition \ref{prop:GAM-and-sam} gives an explanation that the first-order flatness covers wider scenarios compared with the zeroth-order flatness.

We present scenarios where the zeroth-order flatness fails to indicate generalization error while the first-order flatness remains discriminative in Figure \ref{fig:intuition}. 
The gap between a local minimum and the largest loss in $\rho$ can be considered as an accumulation of gradients across the trajectory while the largest gradient norm measures the maximum ascent rate, which may indicate the trends of loss outside of $\rho$.

When $\rho$ is large, there probably exist several other local minima in the neighborhood $B(\bftheta^*, \rho)$ as shown in Figure \ref{fig:sam-vs-GAM-1}. This case is common in practice as shown in Section \ref{sect:density-minima}. In addition, when the number of local minimum in $B(\bftheta^*, \rho)$ becomes larger, $\bftheta^*$ is expected to become sharper since the valley of $\bftheta^*$ becomes narrower. However, the zeroth-order flatness $\sam_{\rho}$ only measures the maximal gap of the loss function in $B(\bftheta^*, \rho)$ and fails to distinguish the cases when the number of local minimums varies. By contrast, the maximal gradient norm in $B(\bftheta^*, \rho)$ increases when the number of local minima is larger, indicating that the first-order flatness can successfully characterize the sharpness in this case.

When $\rho$ only covers a single minimum, as shown in Figure \ref{fig:sam-vs-GAM-2}, the zeroth-order flatness in $\rho$ can be misleading since the observation radius is insufficient to measure the loss trend with the maximum loss. The first-order flatness can help to learn more about the loss trend.

From the perspective of flatness, the zeroth-order flatness focuses on the average gradient within a radius while the first-order flatness measures the maximum gradient. Intuitively, the combination of the zeroth-order and first-order captures a more comprehensive picture of the loss landscape. Furthermore, as discussed in the following Section \ref{sect:method-GAM+sgd}, minimizers for both flatness measures adopt the first-order approximation to calculate the maxima within a radius. This may be the reason that the combination of the two flatness measures achieves the best performance as shown in Section \ref{sect:exp}.

\subsection{Gradient Norm Aware Minimization} \label{sect:method-GAM+sgd}
In this subsection, we propose a novel Gradient norm Aware Minimization (GAM) framework to incorporate the first-order flatness $\GAM_{\rho}(\bftheta)$ into optimization procedures.

Specifically, suppose we could obtain an oracle loss function $\Loracle(\bftheta)$ and calculate its gradient $\nabla \Loracle(\bftheta)$. $\Loracle(\bftheta)$ can be chosen as the empirical loss function $\hat{L}(\bftheta)$ and the empirical loss function with other regularizations (such as the weight decay and the zeroth-order flatness as shown in Definition \ref{defn:zeroth-order}). 

\paragraph{Generalization analysis} We first derive a generalization bound \textit{w.r.t.} the first-order flatness in Proposition \ref{prop:main-bound}.

\begin{proposition} \label{prop:main-bound}
    Suppose the per-data-point loss function $\ell$ is differentiable and bounded by $M$. Fix $\rho > 0$ and $\bftheta \in \caltheta$. Then with probability at least $1 - \delta$ over training set $S$ generated from the distribution $\calD$,
    \begin{equation} \label{eq:main-bound}
        \small
        \begin{aligned}
            & \, \bbE_{\epsilon_i \sim N(0, \rho^2/(\sqrt{d} + \sqrt{\log n})^2)}[L(\bftheta + \bfepsilon)] \\
            \le & \, \hat{L}(\bftheta) + \GAM_{\rho}(\bftheta) +  \frac{M}{\sqrt{n}} \\
            + & \sqrt{\frac{\frac{1}{4} d \log \left(1+\frac{\|\bftheta\|^2\left(\sqrt{d} + \sqrt{\log n}\right)^2}{d \rho^2}\right)+\frac{1}{4}+\log \frac{n}{\delta}+2 \log (6 n+3 d)}{n-1}}.
        \end{aligned}
    \end{equation}
\end{proposition}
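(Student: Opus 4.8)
The plan is to derive \eqref{eq:main-bound} from a PAC-Bayesian generalization bound, instantiated with an isotropic Gaussian \emph{posterior} concentrated on $\bftheta$ and a data-independent Gaussian \emph{prior}, and then to turn the resulting expected perturbed empirical loss into $\hat{L}(\bftheta) + \GAM(\bftheta)$ at the cost of a lower-order tail term; this follows the structure of Theorem~2 in \cite{foret2021sharpness}. Concretely, I would take the posterior $\mathcal{Q} = N(\bftheta, \sigma^2 I_d)$ with $\sigma = \rho / (\sqrt{d} + \sqrt{\log n})$ --- so that $\mathcal{Q}$ is exactly the perturbation distribution on the left-hand side of \eqref{eq:main-bound} --- together with a prior $\mathcal{P} = N(0, \sigma_P^2 I_d)$ whose scale $\sigma_P$ is fixed below. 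McAllester's bound then yields, for any fixed $\sigma_P > 0$ and $\delta' \in (0,1)$, with probability at least $1 - \delta'$ over $S$,
\begin{equation*}
\small
\bbE_{\bfepsilon \sim \mathcal{Q}}[L(\bftheta + \bfepsilon)] \le \bbE_{\bfepsilon \sim \mathcal{Q}}[\hat{L}(\bftheta + \bfepsilon)] + \sqrt{\frac{\kl(\mathcal{Q} \| \mathcal{P}) + \log(n / \delta')}{2(n-1)}},
\end{equation*}
with $\kl(\mathcal{Q} \| \mathcal{P}) = \tfrac12\big(d\sigma^2/\sigma_P^2 - d + d\log(\sigma_P^2/\sigma^2) + \|\bftheta\|^2/\sigma_P^2\big)$.

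Next I would rewrite the empirical term by splitting the posterior mass at $\|\bfepsilon\| = \rho$. On the event $\|\bfepsilon\| < \rho$, the fundamental theorem of calculus along the segment $t \mapsto \bftheta + t\bfepsilon$ gives $\hat{L}(\bftheta + \bfepsilon) - \hat{L}(\bftheta) = \int_0^1 \langle \nabla\hat{L}(\bftheta + t\bfepsilon), \bfepsilon\rangle\, dt \le \|\bfepsilon\| \sup_{t \in [0,1]} \|\nabla\hat{L}(\bftheta + t\bfepsilon)\|$, and since $\bftheta + t\bfepsilon \in B(\bftheta, \rho)$ for all $t \in [0,1]$ this is at most $\rho \max_{\bftheta' \in B(\bftheta, \rho)} \|\nabla\hat{L}(\bftheta')\| = \GAM(\bftheta)$ by Definition~\ref{defn:first-order}. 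On the complement $\|\bfepsilon\| \ge \rho$, I would bound $\hat{L}(\bftheta + \bfepsilon) \le M$ and its probability via a $\chi^2_d$ tail: since $\|\bfepsilon\|^2/\sigma^2 \sim \chi^2_d$, the Laurent--Massart inequality with deviation parameter $t = \log n$ gives $\Pr[\|\bfepsilon\| \ge \rho] = \Pr[\chi^2_d \ge (\sqrt{d} + \sqrt{\log n})^2] \le \Pr[\chi^2_d \ge d + 2\sqrt{d\log n} + 2\log n] \le 1/n$. Using nonnegativity of the loss to discard the factor $\Pr[\|\bfepsilon\| < \rho] \le 1$ on the first piece, this gives $\bbE_{\bfepsilon \sim \mathcal{Q}}[\hat{L}(\bftheta + \bfepsilon)] \le \hat{L}(\bftheta) + \GAM(\bftheta) + M/n \le \hat{L}(\bftheta) + \GAM(\bftheta) + M/\sqrt{n}$.

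It remains to choose $\sigma_P$. The minimizer of $\kl(\mathcal{Q}\|\mathcal{P})$ in $\sigma_P^2$ is $\sigma^2 + \|\bftheta\|^2/d$, which is data-dependent and thus illegal as a PAC-Bayes prior, so --- again following \cite{foret2021sharpness} --- I would union-bound over a geometric grid $\sigma_{P,j}^2 = c\exp((1-j)/d)$, $j \in \mathbb{N}$, assigning failure probability $\delta_j = 6\delta/(\pi^2 j^2)$ so that $\sum_{j} \delta_j = \delta$, and instantiate the bound at the index $j^\star$ for which $\sigma_{P,j^\star}^2$ is within a factor $e^{1/d}$ of $\sigma^2 + \|\bftheta\|^2/d$. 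At that index the $e^{1/d}$ slack is absorbed into an $O(1)$ additive constant, yielding $\kl(\mathcal{Q}\|\mathcal{P}) \le \tfrac{d}{2}\log\big(1 + \tfrac{\|\bftheta\|^2(\sqrt{d} + \sqrt{\log n})^2}{d\rho^2}\big) + \tfrac12$, while $\log(1/\delta_{j^\star}) = \log(n/\delta) + 2\log j^\star + O(1)$; bounding the grid index $j^\star$ in terms of $n$ and $d$ turns the last two terms into $\log(n/\delta) + 2\log(6n + 3d)$. Substituting into the PAC-Bayes inequality, with the $\tfrac12$ from $\kl$ and the $2(n-1)$ denominator producing the $\tfrac14$ prefactors, gives exactly \eqref{eq:main-bound}.

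I expect the main obstacle to be the prior-selection bookkeeping in the last step: the Taylor argument, the $\chi^2$ tail bound, and the invocation of McAllester's inequality are essentially mechanical, but constructing the grid, bounding $j^\star$, and tracking the $e^{1/d}$ slack together with the $\pi^2/6$ weighting so that the constants land precisely on $\tfrac14 d\log(1 + \cdots) + \tfrac14 + \log\tfrac{n}{\delta} + 2\log(6n + 3d)$ over $n-1$ requires care.
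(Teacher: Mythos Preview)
Your approach is essentially identical to the paper's: invoke the PAC-Bayesian bound of \cite{foret2021sharpness} with $\sigma = \rho/(\sqrt{d}+\sqrt{\log n})$, split the perturbed empirical loss at $\|\bfepsilon\| = \rho$, control the small-perturbation piece via the mean value theorem (your integral form is equivalent), and control the large-perturbation piece via Laurent--Massart. The paper does not redo the prior-grid bookkeeping at all --- it simply cites that the analogue of your displayed PAC-Bayes inequality, with the square-root term already in its final form, follows from the proof of Theorem~1 in \cite{foret2021sharpness} --- so your explicit treatment of that step is more detailed than what the paper records, but in the same spirit.

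There is one arithmetic slip in your $\chi^2$ tail computation. You write $\Pr[\chi^2_d \ge (\sqrt{d}+\sqrt{\log n})^2] \le \Pr[\chi^2_d \ge d + 2\sqrt{d\log n} + 2\log n]$, but $(\sqrt{d}+\sqrt{\log n})^2 = d + 2\sqrt{d\log n} + \log n$ is \emph{smaller} than $d + 2\sqrt{d\log n} + 2\log n$, so this inequality goes the wrong way and your choice $t=\log n$ does not cover the required event. The fix is to take $t = \tfrac12\log n$ in Laurent--Massart, giving threshold $d + \sqrt{2d\log n} + \log n \le (\sqrt{d}+\sqrt{\log n})^2$ and tail probability $e^{-t} = 1/\sqrt{n}$; this yields the $M/\sqrt{n}$ term directly rather than via $M/n \le M/\sqrt{n}$. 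This is exactly what the paper does.
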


\begin{remark}
    The left-hand side of Equation \eqref{eq:main-bound} is close to the population-level loss function $L(\bftheta)$ since the numbers of samples $n$ and parameters $d$ are often large. As a result, ignoring high-order terms, the population-level loss $L(\bftheta)$ is bounded by the empirical loss $\hat{L}(\bftheta)$ and the first-order flatness $\GAM_{\rho}(\bftheta)$, which motivates us to use $\GAM_{\rho}(\bftheta)$ as a regularizer to help improve the generalization abilities of models.
\end{remark}

Inspired by Lemma \ref{lemma:eigen-radius} and Proposition \ref{prop:main-bound}, the overall loss function is given by
\begin{equation}
\small
    \Loverall(\bftheta) = \Loracle(\bftheta) + \alpha \GAM_{\rho}(\bftheta),
\end{equation}
where $\alpha$ is a hyperparameter that determines the strength of regularization.
The gradient of the loss function $\Loverall(\bftheta)$ is given by $\nabla \Loverall(\bftheta) = \nabla \Loracle(\bftheta) + \alpha \nabla \GAM_{\rho}(\bftheta)$.
Using similar techniques in \cite{foret2021sharpness}, GAM approximates $\nabla \GAM_{\rho}(\bftheta)$ by
\begin{equation} \label{eq:GAM-derivative}
\small
    \begin{aligned}
        & \nabla \GAM_{\rho}(\bftheta)\approx \rho \cdot \nabla \left\|\nabla \hat{L} (\bftheta^{\text{adv}})\right\|, \quad \bftheta^{\text{adv}} = \bftheta + \rho \cdot \frac{\bff}{\|\bff\|}, \\
        & \bff = \nabla \left\|\nabla \hat{L}(\bftheta)\right\|.
    \end{aligned}
\end{equation}
Details of the derivation of $\nabla \GAM_{\rho}(\bftheta)$ can be found in Appendix A. Notice that
\begin{equation} \label{eq:gradient-norm}
\small
    \forall \bftheta \in \caltheta, \quad \nabla \|\nabla \hat{L}(\bftheta)\| = \frac{\nabla^2 \hat{L}(\bftheta) \cdot \nabla \hat{L}(\bftheta)}{\|\nabla \hat{L}(\bftheta)\|}.
\end{equation}
As a result, Equation \eqref{eq:GAM-derivative} can be calculated efficiently by the Hessian vector product. The pseudocode of the whole optimization procedure is shown in Algorithm \ref{alg:GAM}.

\paragraph{Convergence analysis}
We further analyze the convergence properties of GAM. Firstly, we introduce the Lipschitz smoothness, which is common adopted in optimization-related literature \cite{allen2018neon2,xu2018first,zhuang2022surrogate}.

\begin{definition} \label{defn:loss-smooth}
    A function $J: \caltheta \rightarrow \bbR$ is $\gamma$-Lipschitz smooth if
    \begin{equation}
        \forall \bftheta_1, \bftheta_2 \in \caltheta, \quad \left\|\nabla J(\bftheta_1) - \nabla J(\bftheta_2)\right\| \le \gamma \|\bftheta_1 - \bftheta_2\|.
    \end{equation}
\end{definition}

With Definition \ref{defn:loss-smooth}, we could prove the convergence property of GAM as shown in Theorem \ref{thrm:convergence-main}.

\begin{theorem} \label{thrm:convergence-main}
    Suppose $\Loracle(\bftheta)$ is $\gamma_1$-Lipschitz smooth and $\hat{L}(\bftheta)$ is $\gamma_2$-Lipschitz smooth. Suppose $|\Loracle(\bftheta)|$ is bounded by $M$. For any timestamp $t \in \{0, 1, \dots, T\}$ and any $\bftheta \in \caltheta$, suppose we can obtain noisy and bounded observations $g_t^{\text{loss}}(\bftheta)$, $g_t^{\text{norm}}(\bftheta)$, and $\tilde{g}_t^{\text{loss}}(\bftheta)$ of $\nabla \hat{L}(\bftheta)$, $\nabla \|\nabla \hat{L}(\bftheta)\|$, and $\nabla \Loracle(\bftheta)$ such that
    \begin{equation}
        \small
        \begin{aligned}
            & \bbE[g_t^{\text{loss}}(\bftheta)] = \nabla \hat{L}(\bftheta), \|g_t^{\text{loss}}(\bftheta)\| \le G^{\text{loss}},  \|g_t^{\text{norm}}(\bftheta)\| \le G^{\text{norm}}, \\
            & \bbE[\tilde{g}_t^{\text{loss}}(\bftheta)] = \nabla \Loracle(\bftheta), \|\tilde{g}_t^{\text{loss}}(\bftheta)\| \le \tilde{G}^{\text{loss}}.
        \end{aligned}
    \end{equation}
    Then with learning rate $\eta_t = \eta_0 / \sqrt{t}$ and perturbation radius $\rho_t = \rho_0 / \sqrt{t}$, GAM could obtain
    \begin{equation}
        \frac{1}{T}\sum_{t=1}^{T} \bbE\left[\left\|\nabla \Loverall(\bftheta_t)\right\|^2\right] \le \frac{C_1 + C_2 \log T}{\sqrt{T}},
    \end{equation}
    for some constants $C_1$ and $C_2$ that only depend on $\gamma, G^{\text{loss}}, G^{\text{norm}}, \tilde{G}^{\text{loss}}, M, \eta_0, \rho_0$, and $\alpha$. Here $\nabla \Loverall(\bftheta_t) = \nabla \Loracle(\bftheta_t) + \alpha \nabla \GAM_{\rho}(\bftheta_t)$ and $ \nabla \GAM_{\rho}(\bftheta_t)$ is approximated in Equation \eqref{eq:GAM-derivative}.
\end{theorem}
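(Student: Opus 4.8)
The plan is to treat GAM as an instance of stochastic gradient descent on the surrogate objective $\Loverall(\bftheta) = \Loracle(\bftheta) + \alpha\GAM(\bftheta)$, but with a \emph{biased} gradient estimate: at step $t$ the algorithm uses $\tilde g_t^{\text{loss}}(\bftheta_t)$ together with the ascent-then-Hessian-vector-product approximation from Equation \eqref{eq:GAM-derivative} evaluated at $\rho_t = \rho_0/\sqrt{t}$, rather than the exact $\nabla\Loverall(\bftheta_t)$. So the first step is to establish that $\Loverall$ is itself $\gamma$-Lipschitz smooth for some $\gamma$ depending on $\gamma_1,\gamma_2$ (and $\alpha,\rho_0$): $\nabla\Loracle$ is $\gamma_1$-smooth by hypothesis, and I need $\nabla\GAM$ — or rather its computable approximation — to be Lipschitz. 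Since $\GAM(\bftheta)=\rho\max_{\bftheta'\in B(\bftheta,\rho)}\|\nabla\hat L(\bftheta')\|$ and the approximation replaces the inner max by one gradient-ascent step, I would bound the relevant Lipschitz constant of $\bftheta\mapsto\rho\,\nabla\|\nabla\hat L(\bftheta^{\text{adv}})\|$ using the $\gamma_2$-smoothness of $\hat L$ and boundedness of the gradient/Hessian-vector observations; this is where the constants $G^{\text{loss}}, G^{\text{norm}}$ enter.

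The second step is the standard smooth-SGD descent inequality applied to $\Loverall$: from $\gamma$-smoothness,
\begin{equation}
\Loverall(\bftheta_{t+1}) \le \Loverall(\bftheta_t) - \eta_t\langle \nabla\Loverall(\bftheta_t), d_t\rangle + \frac{\gamma\eta_t^2}{2}\|d_t\|^2,
\end{equation}
where $d_t$ is the actual (biased, noisy) update direction. Writing $d_t = \nabla\Loverall(\bftheta_t) + b_t + \xi_t$, with $\xi_t$ the zero-mean noise and $b_t$ the deterministic bias coming from the $\rho_t$-approximation of $\nabla\GAM$, taking conditional expectation kills the $\xi_t$ cross term and leaves
\begin{equation}
\bbE[\Loverall(\bftheta_{t+1})] \le \bbE[\Loverall(\bftheta_t)] - \eta_t\,\bbE\|\nabla\Loverall(\bftheta_t)\|^2 + \eta_t\,\bbE\big[\|\nabla\Loverall(\bftheta_t)\|\,\|b_t\|\big] + \tfrac{\gamma\eta_t^2}{2}\bbE\|d_t\|^2.
\end{equation}
The third step is to control the bias: I claim $\|b_t\| = O(\rho_t) = O(\rho_0/\sqrt t)$, because the gradient-ascent approximation of the inner maximum incurs error of order $\rho_t$ times a Hessian-Lipschitz-type constant (a first-order Taylor argument, exactly as the approximation in Equation \eqref{eq:GAM-derivative} is first-order accurate), and $\|d_t\|$ is uniformly bounded by a constant built from $\tilde G^{\text{loss}}, G^{\text{loss}}, G^{\text{norm}}, \alpha, \rho_0$. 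Then rearranging, summing over $t=1,\dots,T$, telescoping $\Loverall(\bftheta_1)-\Loverall(\bftheta_{T+1}) \le 2(1+\alpha\rho_0 G^{\text{loss}})M$ or similar via the $M$-boundedness, and plugging in $\eta_t = \eta_0/\sqrt t$, $\rho_t=\rho_0/\sqrt t$:
\begin{equation}
\sum_{t=1}^T \eta_t\,\bbE\|\nabla\Loverall(\bftheta_t)\|^2 \le \text{const} + \sum_{t=1}^T\Big(\eta_t\|b_t\|\cdot\text{const} + \tfrac{\gamma\eta_t^2}{2}\cdot\text{const}\Big) = O(1) + \sum_{t=1}^T O(1/t) = O(\log T).
\end{equation}
Dividing by $\sum_{t=1}^T\eta_t = \Theta(\sqrt T)$ and using $\min_t \le \frac1T\sum_t$ (with a small rescaling since the $\eta_t$ are not constant, absorbing this into $C_1,C_2$) gives the claimed $(C_1 + C_2\log T)/\sqrt T$ rate.

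The main obstacle I expect is the bias analysis in step three: unlike vanilla SGD, GAM's update is not an unbiased estimate of $\nabla\Loverall$, so I must quantify precisely how the single gradient-ascent step plus Hessian-vector product in Equation \eqref{eq:GAM-derivative} deviates from the true $\nabla\GAM(\bftheta_t) = \rho_t\nabla\max_{\bftheta'\in B(\bftheta_t,\rho_t)}\|\nabla\hat L(\bftheta')\|$, and show this deviation is $O(\rho_t)$ uniformly — which in turn seems to require a bound on the third derivative of $\hat L$ (or at least Lipschitzness of the Hessian), something only implicitly available through the bounded-observation assumptions; reconciling that gap, or showing $\gamma_2$-smoothness suffices, is the delicate part. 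A secondary nuisance is that the time-varying $\rho_t$ makes $\Loverall$ itself $t$-dependent ($\GAM$ depends on $\rho_t$), so strictly the telescoping should be done on $\Loracle(\bftheta_t)$ plus a carefully handled $\alpha\GAM_{\rho_t}$ term, with the variation $|\GAM_{\rho_t} - \GAM_{\rho_{t+1}}|$ controlled by $|\rho_t-\rho_{t+1}| = O(t^{-3/2})$, which is summable and hence harmless.
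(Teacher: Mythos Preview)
Your plan treats GAM as biased SGD on $\Loverall$ and runs the descent lemma on $\Loverall$ itself. This differs substantially from the paper's proof, and the obstacle you flag in step three is real and, under only the stated hypotheses, cannot be closed: making the bias $O(\rho_t)$ requires Lipschitz-Hessian control of $\hat L$, whereas the assumptions give only $\gamma_2$-smoothness. A subtler problem is that the theorem does \emph{not} assume $g_t^{\text{norm}}$ is an unbiased estimate of $\nabla\|\nabla\hat L\|$---only that it is bounded---so in your decomposition $d_t = \nabla\Loverall(\bftheta_t) + b_t + \xi_t$ the noise $\xi_t$ is not zero-mean for the norm component, and the cross term does not vanish upon taking conditional expectation.

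The paper sidesteps all of this by never applying the descent lemma to $\Loverall$. Instead it splits
\[
\|\nabla\Loverall(\bftheta_t)\|^2 \le 2\|\nabla\Loracle(\bftheta_t)\|^2 + 2\bigl\|\alpha\rho_t\,\nabla\|\nabla\hat L(\bftheta_t^{\text{adv}})\|\bigr\|^2
\]
and handles each piece separately. For the first piece the descent lemma is applied to $\Loracle$ alone: the update direction is $\bfh_t^{\text{loss}} + \alpha\rho_t\bfh_t^{\text{norm}}$, the first summand is unbiased for $\nabla\Loracle$, and the second is treated purely as a bounded perturbation, so the cross term is controlled by Cauchy--Schwarz as $\eta_t\rho_t\alpha\,\tilde G^{\text{loss}}G^{\text{norm}} = O(1/t)$ with no unbiasedness required. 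Telescoping on $\Loracle$ (bounded by $M$) yields $\frac{1}{T}\sum_t\bbE\|\nabla\Loracle(\bftheta_t)\|^2 = O((\log T)/\sqrt T)$. For the second piece, the pointwise bound $\|\nabla\|\nabla\hat L(\bftheta)\|\| = \bigl\|\nabla^2\hat L(\bftheta)\cdot\nabla\hat L(\bftheta)/\|\nabla\hat L(\bftheta)\|\bigr\| \le \gamma_2$ (immediate from $\gamma_2$-smoothness) gives $\bigl\|\alpha\rho_t\,\nabla\|\nabla\hat L(\bftheta_t^{\text{adv}})\|\bigr\|^2 \le \alpha^2\gamma_2^2\rho_0^2/t$, whose average is $O((\log T)/T)$. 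Your two worries---the time-varying $\rho_t$ and the approximation bias---evaporate in this decomposition: nothing is telescoped on $\Loverall$, and $\nabla\Loverall$ is \emph{defined} in the theorem statement to be the approximation of Equation~\eqref{eq:GAM-derivative}, so there is no bias to analyze.
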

\begin{remark}
    The assumptions in Theorem \ref{thrm:convergence-main} are common and standard when analyzing convergence of non-convex functions via SGD-based methods \cite{DBLP:journals/corr/KingmaB14,reddi2018convergence,zhuang2022surrogate}. In addition, the requirements on $\Loracle(\bftheta)$ (\textit{i.e.}, $\Loracle(\bftheta)$ is Lipschitz smooth and we can obtain unbiased and bounded observations of $\nabla \Loracle(\bftheta)$) are mild and common. For example, when the empirical loss function $\hat{L}(\bftheta)$ satisfies the constraints, it is easy to check that $\hat{L}(\bftheta)$ with the weight decay regularization also meets the requirements.
\end{remark}

\begin{algorithm}[t]
\caption{Gradient norm Aware Minimization (GAM)}
\label{alg:GAM}
\begin{algorithmic}[1]
    \State \textbf{Input:} Batch size $b$, Learning rate $\eta_t$, Perturbation radius $\rho_t$, Trade-off coefficient $\alpha$, Small constant $\xi$
    \State $t \leftarrow 0$, $\bftheta_0 \leftarrow$ initial parameters
    \While{$\bftheta_t$ not converged}
        \State Sample $W_t$ from the training data with $b$ instances
        \State $\bfh_t^{\text{loss}} \leftarrow \nabla \Loracle(\bftheta_t)$ \Comment{Calculate the oracle loss gradient $\nabla \Loracle(\bftheta_t)$}
        \State $\bff_t \leftarrow \nabla^2 \hat{L}_{W_t}(\bftheta_t) \cdot \frac{\nabla \hat{L}_{W_t}(\bftheta_{t})}{\left\|\nabla \hat{L}_{W_t}(\bftheta_{t})\right\| + \xi}$
        \State $\bftheta^{\text{adv}}_t \leftarrow \bftheta_{t} + \rho_t \cdot \frac{\bff_t}{\|\bff_t\| + \xi}$
        \State $\bfh^{\text{norm}}_t \leftarrow \rho_t \cdot \nabla^2 \hat{L}_{W_t}(\bftheta_t^{\text{adv}}) \cdot \frac{\nabla \hat{L}_{W_t}(\bftheta_t^{\text{adv}})}{\left\|\nabla \hat{L}_{W_t}(\bftheta_t^{\text{adv}})\right\| + \xi}$ \Comment{Calculate the norm gradient $\nabla \GAM_{\rho_t}(\bftheta_t)$}
        \State $\bftheta_{t+1} \leftarrow \bftheta_{t} - \eta_t(\bfh^{\text{loss}}_t + \alpha \bfh^{\text{norm}}_t)$
        \State $t \leftarrow t + 1$
    \EndWhile
    \State \Return{$\bftheta_t$}
\end{algorithmic}
\end{algorithm}
\section{Experiments}
\label{sect:exp}

We empirically show that the case discussed in Section \ref{sect:GAM-vs-sam} is common in practice.
Then we evaluate GAM with random initialization on various state-of-the-art models and the transfer learning setting on various datasets. 
We show the Hessian spectra of GAM at convergence and discuss the computation overhead of GAM with the considerable improvement of model generalization.

\begin{figure}[t]
    \centering
    \includegraphics[width=0.60\linewidth]{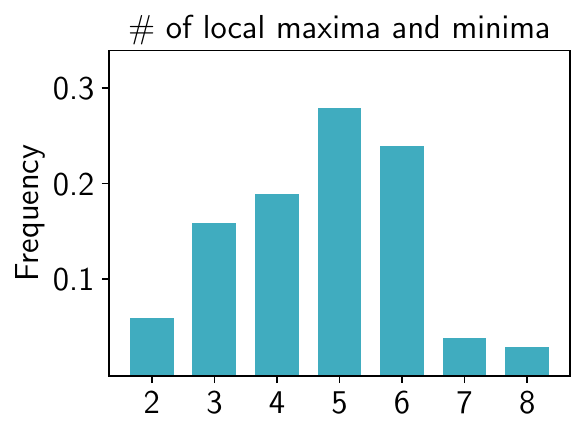}
    \vspace{-5px}
    \caption{The distribution of numbers of local minima and maxima within the perturbation radius $\rho$ after convergence.}
    \label{fig:local-minimum}
    \vspace{-15px}
\end{figure}

\subsection{The Density of Local Minima} \label{sect:density-minima}
To investigate the number of local minima within the perturbation radius, we train 3 ResNet-18 models with SAM on CIFAR-100 with proper hyperparameters for 200 epochs. The perturbation radius is set to 0.1 as suggested by \cite{foret2021sharpness}. We load the checkpoints at convergence for evaluation. We randomly generate 100 perturbation directions with the same size as the model weights for each model. For each direction, we repeatedly add a perturbation with the norm of 0.01 along the selected direction 10 times. We calculate the training loss after each addition and report the distribution of the number of local maxima and minima along each perturbation direction within the perturbation radius $\rho$ of 0.1. As shown in Figure \ref{fig:local-minimum}, we find more than 1 local minima within $\rho$ for most of the directions, indicating that the case is common in practice. As discussed in Section \ref{sect:GAM-vs-sam}, zeroth-order flatness fails to tell the sharpness caused by multiple minima while the first-order flatness measure increases as the sharpness grow.

\subsection{Training from Scratch}
\subsubsection{CIFAR-10 and CIFAR-100}
\label{sect:cifar}

\begin{table*}[t]
    \centering
    \small
    \caption{Results of GAM with state-of-the-art models on CIFAR-10 and CIFAR-100. The best results are highlighted in bold font.}
    \resizebox{\linewidth}{!}{
    \begin{tabular}{c|c|cc|cc|cc|cc}
        \toprule
        & & \multicolumn{4}{c|}{CIFAR-10} & \multicolumn{4}{c}{CIFAR-100} \\
        \midrule
        Model & Aug & SGD & SGD + GAM & SAM & SAM + GAM & SGD & SGD + GAM & SAM & SAM + GAM \\
        \midrule
        ResNet18 & Basic & 95.32$_{\pm 0.13}$ & \textbf{96.17}$_{\pm 0.21}$ & 96.10$_{\pm 0.20}$ & \textbf{96.75}$_{\pm 0.18}$ & 78.32$_{\pm 0.32}$ & \textbf{79.53}$_{\pm 0.30}$ & 79.27$_{\pm 0.16}$ & \textbf{80.45}$_{\pm 0.25}$ \\
        ResNet18 & Cutout & 95.99$_{\pm 0.13}$ & \textbf{96.46}$_{\pm 0.20}$ & 96.64$_{\pm 0.13}$ & \textbf{96.99}$_{\pm 0.23}$ & 78.73$_{\pm 0.13}$ & \textbf{79.89}$_{\pm 0.31}$ & 79.43$_{\pm 0.15}$ & \textbf{80.80}$_{\pm 0.14}$ \\
        ResNet18 & RA & 96.07$_{\pm 0.07}$ & \textbf{96.52}$_{\pm 0.09}$ & 96.64$_{\pm 0.17}$ & \textbf{97.06}$_{\pm 0.13}$ & 78.62$_{\pm 0.32}$ & \textbf{79.82}$_{\pm 0.24}$ & 79.71$_{\pm 0.15}$ & \textbf{80.97}$_{\pm 0.29}$  \\
        ResNet18 & AA & 96.13$_{\pm 0.05}$ & \textbf{96.71}$_{\pm 0.07}$ & 96.75$_{\pm 0.08}$ & \textbf{97.17}$_{\pm 0.08}$ & 78.88$_{\pm 0.15}$ & \textbf{80.56}$_{\pm 0.21}$ & 80.58$_{\pm 0.25}$ & \textbf{81.59}$_{\pm 0.24}$ \\
        \midrule
        ResNet101 & Basic & 96.35$_{\pm 0.08}$ & \textbf{96.98}$_{\pm 0.11}$ & 96.82$_{\pm 0.16}$ & \textbf{97.20}$_{\pm 0.15}$ & 80.47$_{\pm 0.13}$ & \textbf{82.21}$_{\pm 0.40}$ & 82.03$_{\pm 0.12}$ & \textbf{83.13}$_{\pm 0.07}$ \\
        ResNet101 & Cutout & 96.56$_{\pm 0.18}$ & \textbf{97.22}$_{\pm 0.05}$ & 97.07$_{\pm 0.08}$ & \textbf{97.36}$_{\pm 0.24}$ & 80.53$_{\pm 0.30}$ & \textbf{82.36}$_{\pm 0.24}$ & 81.60$_{\pm 0.35}$ & \textbf{83.40}$_{\pm 0.13}$ \\
        ResNet101 & RA & 96.68$_{\pm 0.25}$ & \textbf{97.33}$_{\pm 0.30}$ & 97.12$_{\pm 0.18}$ & \textbf{97.40}$_{\pm 0.23}$ & 80.60$_{\pm 0.28}$ & \textbf{82.40}$_{\pm 0.31}$ & 82.19$_{\pm 0.34}$ & \textbf{83.28}$_{\pm 0.20}$ \\
        ResNet101 & AA & 96.78$_{\pm 0.14}$ & \textbf{97.39}$_{\pm 0.18}$ & 97.18$_{\pm 0.11}$ & \textbf{97.42}$_{\pm 0.1}$ & 81.83$_{\pm 0.37}$ & \textbf{83.19}$_{\pm 0.15}$ & 82.44$_{\pm 0.47}$ & \textbf{83.94}$_{\pm 0.23}$ \\
        \midrule
        WRN28\_2 & Basic & 94.82$_{\pm 0.07}$ & \textbf{95.69}$_{\pm 0.13}$ & 95.47$_{\pm 0.08}$ & \textbf{95.85}$_{\pm 0.08}$ & 75.45$_{\pm 0.25}$ & \textbf{77.21}$_{\pm 0.31}$ & 77.04$_{\pm 0.18}$ & \textbf{77.69}$_{\pm 0.20}$ \\
        WRN28\_2 & Cutout & 95.70$_{\pm 0.20}$ & \textbf{96.41}$_{\pm 0.18}$ & 96.22$_{\pm 0.13}$ & \textbf{96.39}$_{\pm 0.22}$ & 76.80$_{\pm 0.45}$ & \textbf{78.58}$_{\pm 0.24}$ & 78.04$_{\pm 0.43}$ & \textbf{79.33}$_{\pm 0.12}$ \\
        WRN28\_2 & RA & 95.75$_{\pm 0.16}$ & \textbf{96.35}$_{\pm 0.13}$ & 96.22$_{\pm 0.08}$ & \textbf{96.49}$_{\pm 0.20}$ & 76.73$_{\pm 0.27}$ & \textbf{78.66}$_{\pm 0.03}$ & 77.88$_{\pm 0.29}$ & \textbf{78.96}$_{\pm 0.13}$ \\
        WRN28\_2 & AA & 95.44$_{\pm 0.06}$ & \textbf{95.98}$_{\pm 0.09}$ & 96.07$_{\pm 0.08}$ & \textbf{96.44}$_{\pm 0.09}$ & 77.35$_{\pm 0.02}$ & \textbf{79.05}$_{\pm 0.10}$ & 78.64$_{\pm 0.23}$ & \textbf{79.50}$_{\pm 0.21}$ \\
        \midrule
        WRN28\_10 & Basic & 95.73$_{\pm 0.10}$ & \textbf{96.61}$_{\pm 0.15}$ & 96.78$_{\pm 0.80}$ & \textbf{97.29}$_{\pm 0.11}$ & 81.40$_{\pm 0.13}$ & \textbf{83.45}$_{\pm 0.09}$ & 83.41$_{\pm 0.04}$ & \textbf{84.31}$_{\pm 0.06}$ \\
        WRN28\_10 & Cutout & 96.74$_{\pm 0.03}$ & \textbf{96.97}$_{\pm 0.05}$ & 97.35$_{\pm 0.16}$ & \textbf{97.56}$_{\pm 0.12}$ & 81.53$_{\pm 0.40}$ & \textbf{83.69}$_{\pm 0.08}$ & 82.38$_{\pm 0.15}$ & \textbf{84.43}$_{\pm 0.13}$ \\
        WRN28\_10 & RA & \textbf{97.14}$_{\pm 0.04}$ & 96.83$_{\pm 0.03}$ & \textbf{97.58}$_{\pm 0.07}$ & 97.49$_{\pm 0.03}$ & 81.65$_{\pm 0.18}$ & \textbf{83.84}$_{\pm 0.09}$ & 82.79$_{\pm 0.06}$ & \textbf{84.68}$_{\pm 0.13}$ \\
        WRN28\_10 & AA & 96.93$_{\pm 0.12}$ & \textbf{97.05}$_{\pm 0.04}$ & 97.48$_{\pm 0.06}$ & \textbf{97.67}$_{\pm 0.08}$ & 81.99$_{\pm 0.11}$ & \textbf{84.02}$_{\pm 0.18}$ & 83.84$_{\pm 0.30}$ & \textbf{84.81}$_{\pm 0.21}$ \\
        \midrule
        PyramidNet110 & Basic & 96.19$_{\pm 0.11}$ & \textbf{97.11}$_{\pm 0.14}$ & 97.26$_{\pm 0.05}$ & \textbf{97.51}$_{\pm 0.09}$ & 82.74$_{\pm 0.12}$ & \textbf{84.91}$_{\pm 0.09}$ & 85.01$_{\pm 0.09}$ & \textbf{85.25}$_{\pm 0.06}$ \\
        PyramidNet110 & Cutout & 96.82$_{\pm 0.09}$ & \textbf{97.32}$_{\pm 0.21}$ & 97.49$_{\pm 0.06}$ & \textbf{97.91}$_{\pm 0.14}$ & 83.31$_{\pm 0.21}$ & \textbf{85.20}$_{\pm 0.19}$ & 84.90$_{\pm 0.03}$ & \textbf{85.46}$_{\pm 0.10}$ \\
        PyramidNet110 & RA & 97.15$_{\pm 0.21}$ & \textbf{97.80}$_{\pm 0.22}$ & 97.60$_{\pm 0.09}$ & \textbf{98.01}$_{\pm 0.10}$ & 84.04$_{\pm 0.19}$ & \textbf{86.47}$_{\pm 0.14}$ & 85.33$_{\pm 0.27}$ & \textbf{85.64}$_{\pm 0.20}$ \\
        PyramidNet110 & AA & 97.11$_{\pm 0.01}$ & \textbf{97.85}$_{\pm 0.02}$ & 97.61$_{\pm 0.14}$ & \textbf{97.95}$_{\pm 0.10}$ & 84.48$_{\pm 0.03}$ & \textbf{85.92}$_{\pm 0.03}$ & 85.69$_{\pm 0.17}$ & \textbf{86.35}$_{\pm 0.18}$ \\
        \bottomrule
    \end{tabular}}
    \vspace{-10px}
    \label{tab:cifar2}
\end{table*}

We conduct experiments on CIFAR-10 and CIFAR-100 \cite{krizhevsky2009learning} with ResNets \cite{he2016deep}, WideResNet \cite{zagoruyko2016wide}, ResNeXt \cite{xie2017aggregated}, PyramidNet \cite{han2017deep} and Vision Transformers (ViTs) \cite{dosovitskiy2020image}. All the models are trained for 200 epochs from scratch. We evaluate GAM both with basic data augmentations (i.e., horizontal flip, padding by four pixels, and random crop) and advanced data augmentation including cutout regularization \cite{devries2017improved}, RandAugment \cite{cubuk2018autoaugment} and AutoAugment \cite{cubuk2020randaugment}. 

GAM has two hyperparameters, $\rho$ and $\alpha$. We conduct a grid search over $\{ 0.05, 0.1, 0.2, 0.5, 1.0, 2.0\}$ to tune $\rho$ and $\{0.1, 0.2, 0.5, 1.0, 2.0, 3.0, ..., 10.0 \}$ for $\alpha$ using 10\% of the training data as a validation set. The selection of hyperparameters is in Appendix C.5.

As a gradient regularizer, GAM can be integrated with current optimizers such as SGD and Adam \cite{DBLP:conf/iclr/KeskarMNST17}. We also show that GAM can be combined with sharpness-aware training procedures such as SAM. As shown in Section \ref{sect:GAM-vs-sam}, the GAM term bounds the regularization term in SAM. Yet the practical implementations of GAM and SAM rely on first-order Taylor expansion of different objective functions (GAM approximates the maximum gradient norm while SAM approximates the maximum loss). We empirically show that the combination of GAM and SAM outperforms both of them, indicating that they may strengthen each other with omitted items.

As shown in 
Table \ref{tab:cifar2}, GAM improves generalization for all models on CIFAR-10 and CIFAR-100. When combined with SGD, GAM achieves considerably higher test accuracy compared with SGD. Moreover, GAM further improves generalization when combined with SAM. For example, GAM improves SAM performance by 1.18\% and 1.10\%  on CIFAR-100 with ResNet-18 and ResNet-101, respectively, which are noticeable margins. Other experimental results are in Appendix C.1.

\subsubsection{ImageNet}
\begin{table*}[t]
    \centering
    \small
    \caption{Results of GAM with ResNet50 on ImageNet.}
    \vspace{-10px}
    \begin{tabular}{c|c|cc|cc}
        \toprule
         Model & Dataset & Base Opt & Base + GAM & SAM & SAM + GAM \\
        \midrule
        ResNet50 & Top-1 & 76.01$_{\pm 0.19}$ & \textbf{76.59}$_{\pm 0.15}$ & 76.47$_{\pm 0.11}$ & \textbf{76.86}$_{\pm 0.15}$  \\
        ResNet50 & Top-5 & 92.75$_{\pm 0.08}$ & \textbf{93.10}$_{\pm 0.08}$ & 93.07$_{\pm 0.05}$ & \textbf{93.22}$_{\pm 0.06}$  \\
        \midrule
        ResNet101 & Top-1 & 77.69$_{\pm 0.08}$ & \textbf{78.45}$_{\pm 0.10}$ & 78.35$_{\pm 0.12}$ & \textbf{78.70}$_{\pm 0.12}$ \\
        ResNet101 & Top-5 & 93.76$_{\pm 0.09}$ & \textbf{94.09}$_{\pm 0.12}$ & 94.02$_{\pm 0.06}$ & \textbf{94.15}$_{\pm 0.12}$ \\
        \midrule
        ViT-S/32 & Top-1 & 68.26$_{\pm 0.22}$ & \textbf{69.95}$_{\pm 0.16}$ & 69.73$_{\pm 0.05}$ & \textbf{70.15}$_{\pm 0.18}$ \\
        ViT-S/32 & Top-5 & 87.39$_{\pm 0.19}$ & \textbf{88.11}$_{\pm 0.26}$ & 87.91$_{\pm 0.30}$ & \textbf{88.23}$_{\pm 0.18}$ \\
        \midrule
        ViT-B/32 & Top-1 & 71.15$_{\pm 0.14}$ & \textbf{73.58}$_{\pm 0.06}$ & 73.10$_{\pm 0.18}$ & \textbf{73.70}$_{\pm 0.10}$ \\
        ViT-B/32 & Top-5 & 90.12$_{\pm 0.07}$ & \textbf{91.15}$_{\pm 0.19}$ & 91.03$_{\pm 0.06}$ & \textbf{91.50}$_{\pm 0.16}$ \\
        \bottomrule
    \end{tabular}
    \label{tab:imagenet}
\end{table*}

We use ResNet50, ResNet101 \cite{he2016deep}, ViT-S/32 and ViT-B/32 \cite{dosovitskiy2020image} for evaluations on ImageNet \cite{russakovsky2015imagenet} to evaluate GAM on large scale data. 
For ResNet, we use SGD with momentum= 0.9 as the base optimizer for both GAM and SAM. For ViT, we use the AdamW optimizer with $\beta_1 = 0.9$, $\beta_2 = 0.999$. 
We train ResNets for 90 epochs and ViTs for 300 epochs following \cite{dosovitskiy2020image}. 
We set the batch size to 256, learning rate to 0.1, and weight decay to 0.0001. The learning rate is decayed using a cosine schedule. 

As shown in Table \ref{tab:imagenet}, GAM consistently improves SGD performance on ImageNet for both ResNets and ViTs. GAM also further improves the model generalization compared with SAM. The combination of GAM and SAM outperforms both SGD and SAM by a noticeable margin.

\subsection{Transfer Learning}

\begin{table*}[t]
    \centering
    \small
    \caption{Results of GAM for finetuning EfficientNet-b0 and Swin Transformers on various datasets.}
    \vspace{-5px}
    \resizebox{0.9\linewidth}{!}{
    \begin{tabular}{c|cc|cc|cc|cc}
        \toprule
        & \multicolumn{4}{c|}{EfficientNet-b0} & \multicolumn{4}{c}{Swin-t} \\
        \midrule
         Dataset & SGD & SGD + GAM & SAM & SAM + GAM & AdamW & AdamW + GAM & SAM & SAM + GAM \\
        \midrule
        Stanford Cars & 82.14 & \textbf{83.50} & 83.21 & \textbf{83.98} & 83.50 & \textbf{84.90} & 83.55 & \textbf{85.29} \\ 
        CIFAR-10 & 86.26 & \textbf{87.37} & 86.95 & \textbf{87.97} & 91.32 & \textbf{92.06} & 91.77 & \textbf{92.55} \\ 
        CIFAR-100 & 63.75 & \textbf{64.85} & 64.29 & \textbf{65.03} & 72.88 & \textbf{73.78} & 73.99 & \textbf{74.30} \\ 
        Oxford\_IIIT\_Pets & 91.03 & \textbf{91.80} & 91.65 & \textbf{91.96} & 93.49 & \textbf{93.87} & 93.59 & \textbf{94.03} \\
        Food101 & 82.54 & \textbf{82.69} & 82.57 & \textbf{83.01} & 86.38 & \textbf{86.89} & 86.64 & \textbf{87.03} \\
        \bottomrule
    \end{tabular}}
    \vspace{-10px}
    \label{tab:transfer}
\end{table*}

Transfer learning shows the generalization of models when trained on sufficient labeled data and finetuned on a small dataset \cite{zhuang2020comprehensive}. We show that GAM improves generalization on all datasets in this setting.

We consider Stanford Cars \cite{krause20133d}, CIFAR-10, CIFAR-100 \cite{krizhevsky2009learning}, Oxford\_IIIT\_Pets \cite{parkhi2012cats} and Food101 \cite{bossard2014food} for this setting. We apply SGD, SAM, and GAM to finetuning EfficientNet-b0 \cite{tan2019efficientnet} and Swin-Transformer-t \cite{liu2021swin} on these datasets. Both EfficientNet-b0 and Swin-Transformer-t are pretrained on ImageNet.  

We use ImageNet pretrained weights of EfficientNet-b0 and Swin-t except for the last linear layer for classification. Following previous works, we train for 40k steps since our batch size is 128. The initial learning rate is set to 2e-3 with cosine learning rate decay. Weight decay is set to 1e-5. We do not use any data augmentations for Stanford Cars, Oxford\_IIIT\_Pets and Food101. For CIFAR datasets, we employ the same data augmentations as previous experiments. 

As seen in Table \ref{tab:transfer}, GAM once again brings generalization improvement for SGD, AdamW, and SAM on both EfficientNet-b0 and Swin-t. For example, GAM improves AdamW by 1.2\% on Stanford Cars with Swin-t and 1.11\% on CIFAR-10 with EfficientNet-b0.

Moreover, we leave the experiments of robustness to label noise in Appendix C.2.

\subsection{Top Eigenvalues of Hessian and Hessian Trace} \label{sect:exp-eigenvalue}

\begin{figure*}[t]
    \centering
    \includegraphics[width=0.8\linewidth]{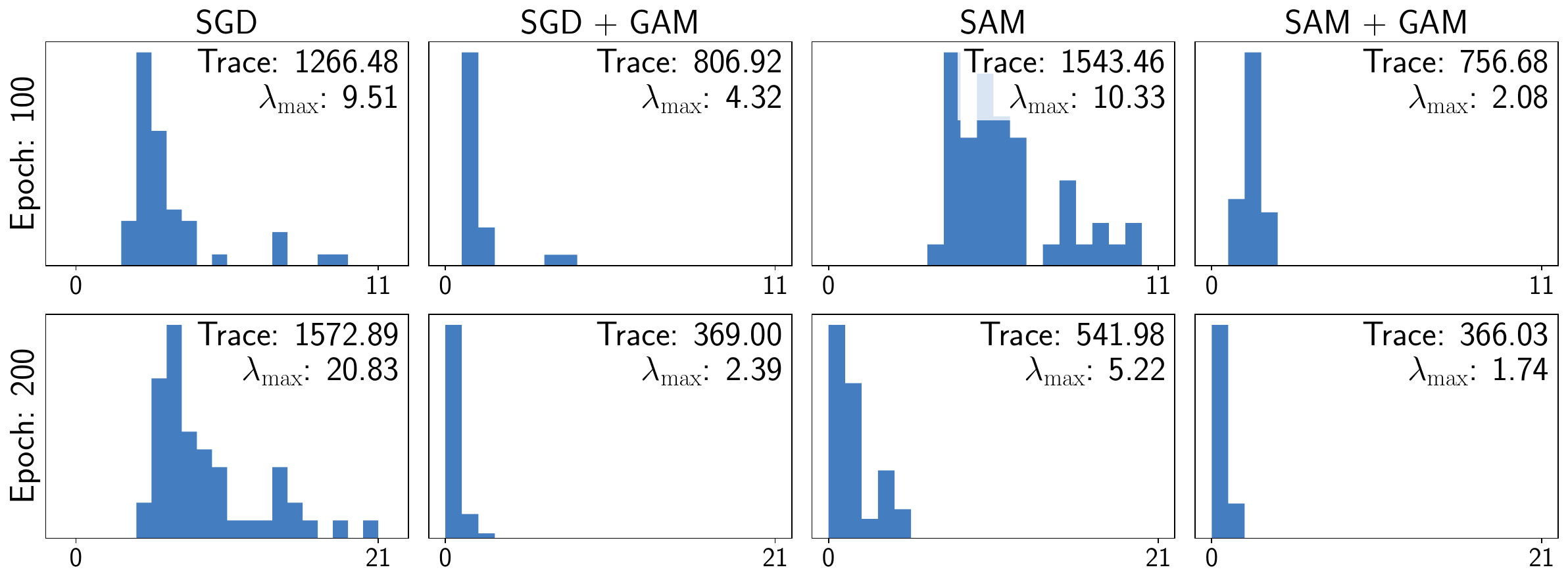}
    \vspace{-10pt}
    \caption{The distribution of top eigenvalues and the trace of Hessian at epoch 100 and 200 on CIFAR-100 with SGD, SGD + GAM, SAM, or SAM + GAM.}
    \label{fig:eigenvalue}
    \vspace{-15pt}
\end{figure*}

\begin{figure}[ht]
    \centering
    \includegraphics[width=\linewidth]{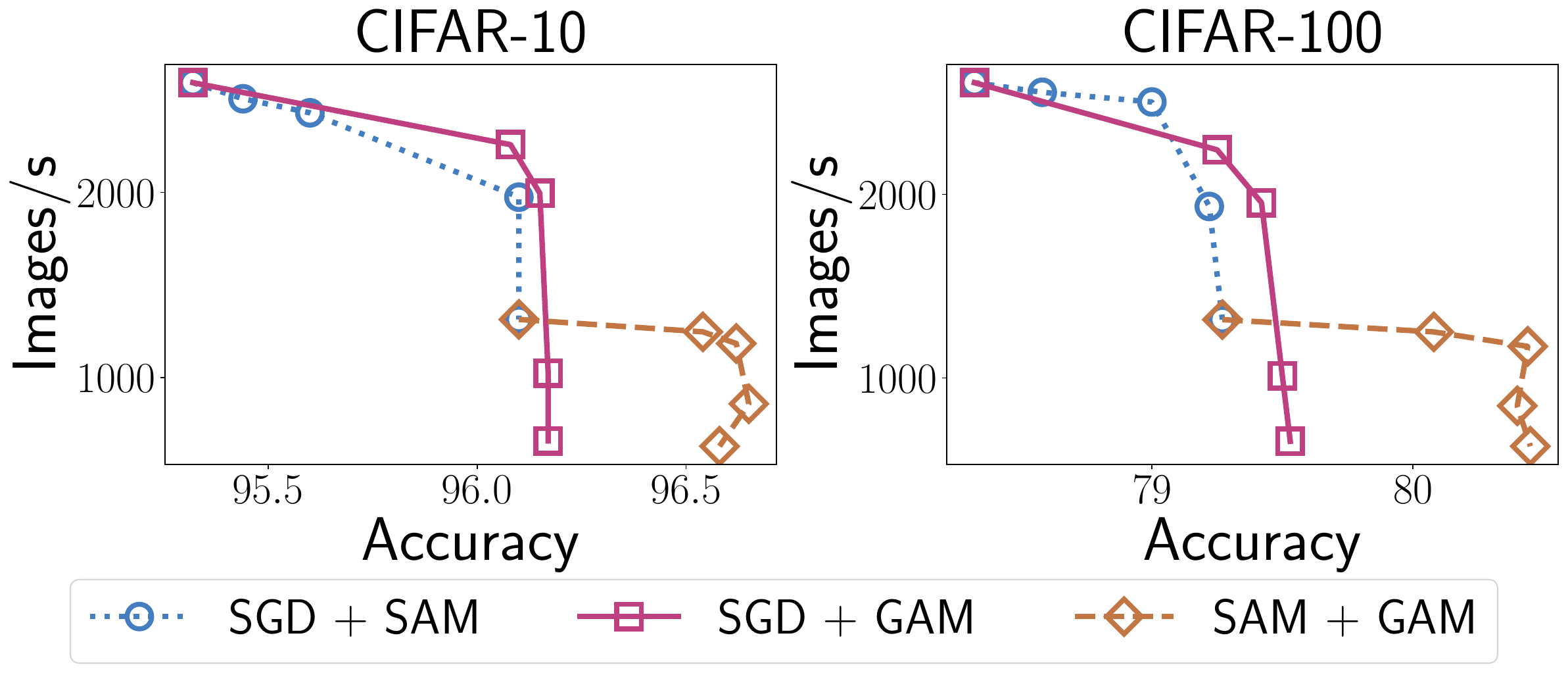}
    \caption{Accuracy and training speed of training with different ratios ([0, 0.05, 0.1, 0.5, 1] from upper left to lower right, see details in Appendix C.3) of iterations using GAM. Numbers in parentheses indicate the ratio of the training speed compared with the vanilla base optimizer SGD/SAM. }
    \label{fig:time}
    \vspace{-10pt}
\end{figure}

Lemma \ref{lemma:eigen-radius} shows that the GAM term can be an equivalent measure of the maximum eigenvalue of the Hessian, which is a well-known measure of flatness/sharpness. Thus optimizing the GAM term decreases the maximum eigenvalue of the Hessian and leads to flatter minima. To empirically validate that GAM finds optima with low curvature, we present the Hessian spectra of SGD, SAM, and GAM. We consider the maximum eigenvalue of Hessian and the Hessian trace, which measures the expected loss increase under random perturbations to the weights \cite{kaur2022maximum} as the measures of flatness. We empirically show that GAM significantly decreases both the maximum eigenvalue and the trace of Hessian during training compared with SGD and SAM, and thus finds flatter minima.

We compute the Hessian spectra of ResNet-18 trained on CIFAR-100 for 200 epochs with SGD, SAM, SGD + GAM, and SAM + GAM. We use power iteration \cite{yao2018hessian} to compute the top eigenvalues of Hessian and Hutchinson’s method \cite{avron2011randomized,bai1996some,yao2020pyhessian} to compute the Hessian trace. We report the histogram of the distribution of the top-50 Hessian eigenvalues for each method.

As shown in Figure \ref{fig:eigenvalue}, the model trained with SGD has a higher maximum Hessian eigenvalue and Hessian trace at convergence compared to the middle of training, indicating that optimizing directly with cross-entropy loss does not contribute to the lower Hessian spectra. In contrast, GAM leads to lower Hessian spectra and thus flatter minima. Moreover, GAM helps to reduce both top eigenvalues and the Hessian trace when combined with SAM, where Hessian spectra at convergence are lower than other methods. We show visualizations of landscapes of SGD, SAM, and GAM in Section \ref{app:vis}.

\subsection{Computation Overhead}

As discussed in Section \ref{sect:method-GAM+sgd}, the GAM term can be easily calculated via the Hessian vector product, which is an efficient approach to calculating the dot product between the Hessian and a vector without the need to calculate the entire Hessian. However, it can still introduce extra computation when calculated in each iteration. To accelerate the training with GAM, we investigate applying GAM to only a few iterations in each epoch. 
Surprisingly, we show that only several iterations of learning with GAM (with higher $\alpha$ compared with applying GAM to all iterations) improve model generalization considerably. As shown in Figure \ref{fig:time}, with approximately 1/20 of iterations, GAM considerablly improves test accuracy for both SGD and SAM on CIFAR-10 and CIFAR-100. When applying GAM to 1/10 iterations of training, it shows similar effectiveness to applying GAM to all the iterations, while the extra computational cost for GAM is less than 25\% of the original cost.
GAM outperformes SAM with lower computation overhead and achieves significant improvement when combined with SGD (the red line in the figure). When combined with SAM, GAM also improves generalization with low computation cost.
Thus the computation overhead of GAM can be easily controlled. The optimization of first-order flatness can be further accelerated by approximation of second-order gradient with first-order gradient and the details are in Appendix D.

\subsection{Visualization of Landscapes}
\label{app:vis}

\begin{figure}[t]
\vspace{-8pt}
    \centering
    \begin{subfigure}[b]{0.49\linewidth}
        \centering
        \includegraphics[width=0.8\linewidth]{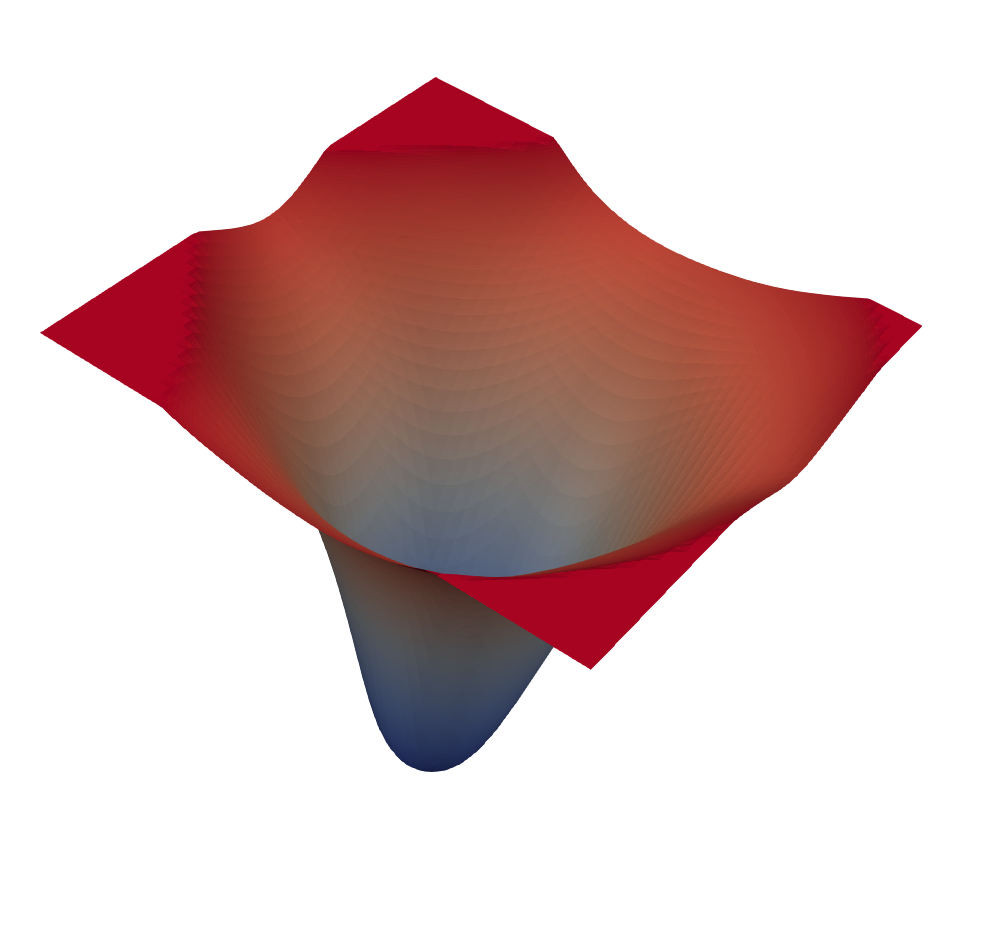}
        \vspace{-5px}
        \caption{SGD}
        \label{fig:sgd}
    \end{subfigure}
    \hfill
    \begin{subfigure}[b]{0.49\linewidth}
        \centering
        \includegraphics[width=0.8\linewidth]{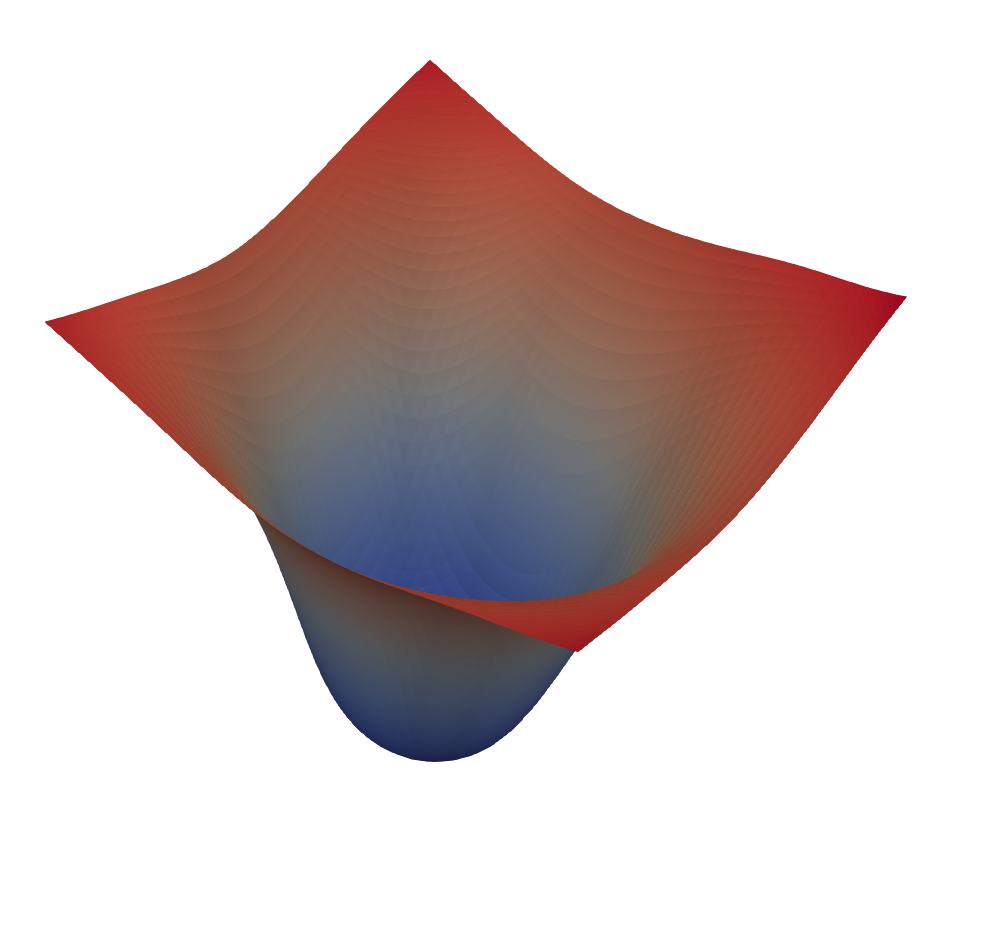}
        \vspace{-5px}
        \caption{SGD + GAM}
        \label{fig:sgd-GAM}
    \end{subfigure}
    \begin{subfigure}[b]{0.49\linewidth}
        \centering
        \includegraphics[width=0.8\linewidth]{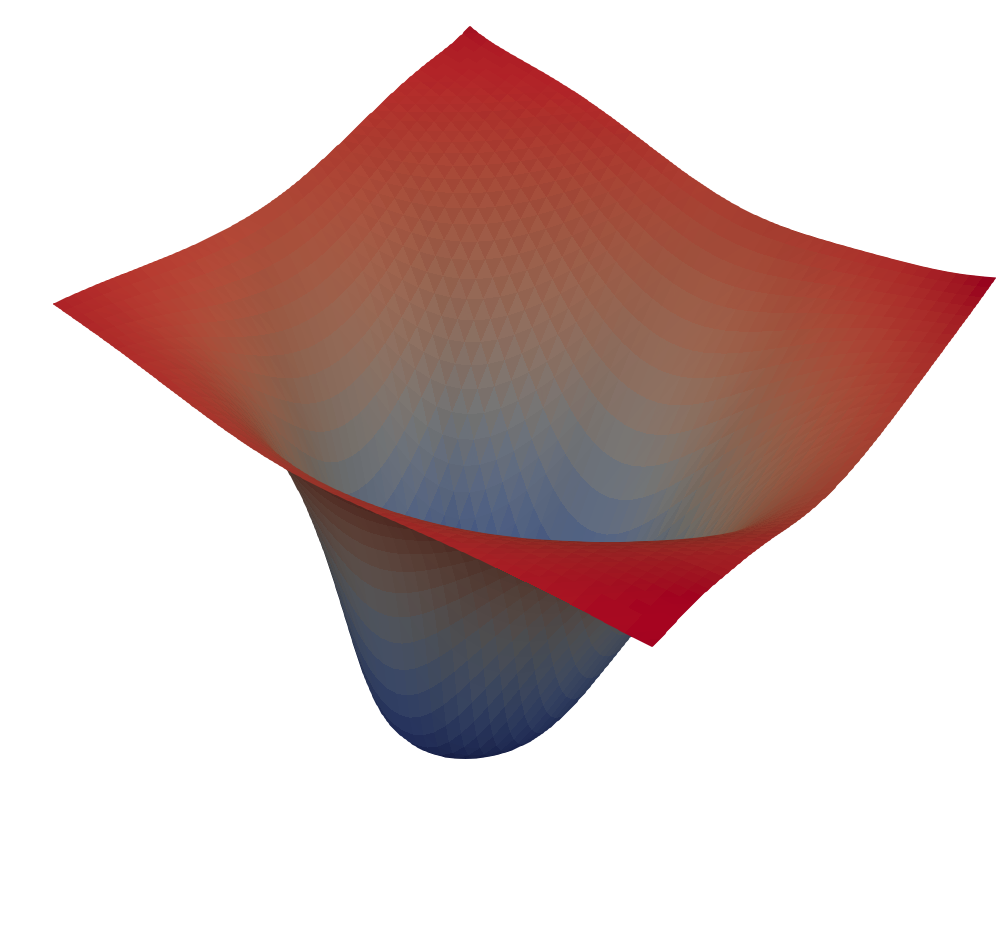}
        \vspace{-5px}
        \caption{SAM}
        \label{fig:sam}
    \end{subfigure}
    \hfill
    \begin{subfigure}[b]{0.49\linewidth}
        \centering
        \includegraphics[width=0.8\linewidth]{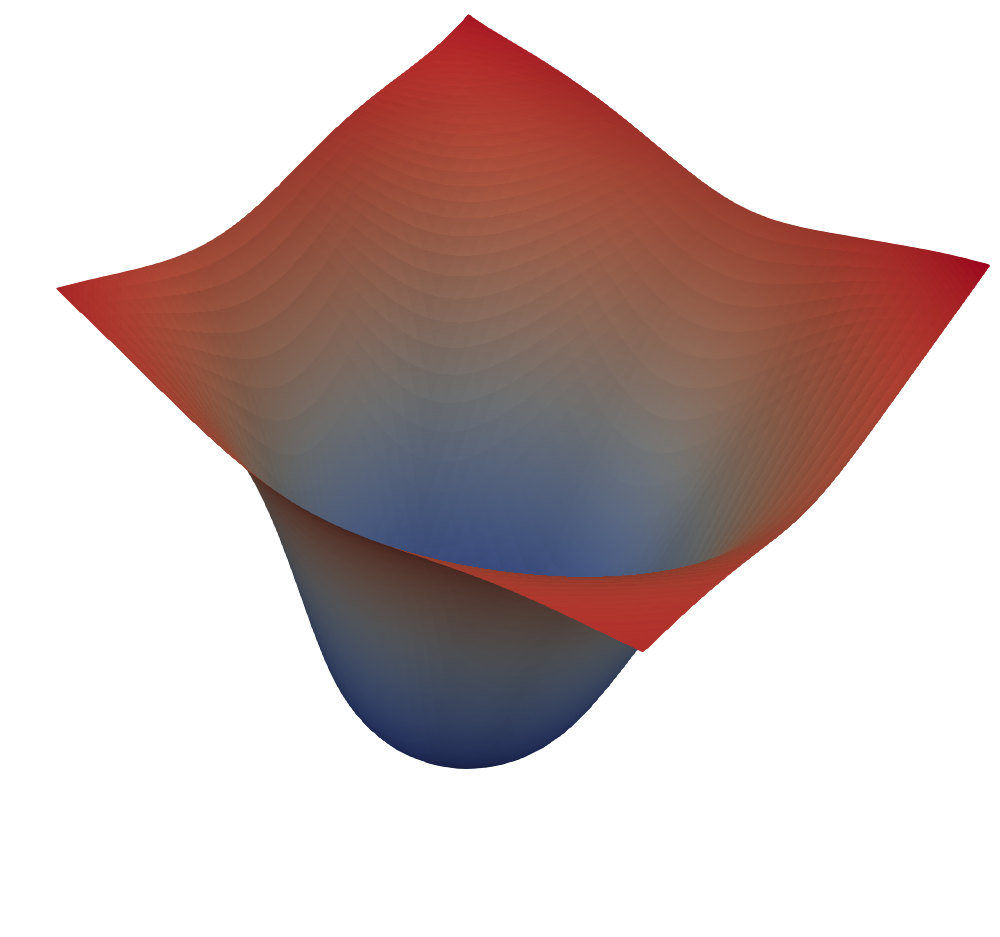}
        \vspace{-5px}
        \caption{SAM + GAM}
        \label{fig:sam-GAM}
    \end{subfigure}
    \caption{Visualization of loss landscape for SGD, SGD + GAM, SAM, SAM + GAM.}
    \label{fig:vis}
    \vspace{-15pt}
\end{figure}

We visualize the loss landscapes of models trained with SGD, SGD + GAM, SAM, SAM + GAM of the ResNet-18 model on CIFAR-100 following \cite{li2018visualizing}. All the models are trained with the same hyperparameters for 200 epochs as described in Section \ref{sect:cifar}. As shown in Figure \ref{fig:vis}, GAM consistently helps SGD and SAM find flatter minima.

\section{Discussions}
We show that the most popular definitions of flatness, which we call the zeroth-order flatness, can be insufficient to indicate generalization error. Thus we propose first-order flatness, a stronger flatness measure that bounds both the maximum eigenvalue of Hessian and the zeroth-order flatness. We also propose a novel Gradient norm Aware Minimization (GAM) to optimize the first-order flatness. We empirically show that GAM considerably improves generalization for SGD, AdamW, and SAM.

Despite the empirical effectiveness of GAM, adopting the first-order flatness for generalization has the following limitations which could lead to potential future work. First, a theoretical explanation of whether a stronger flatness measure is better for generalization is vital for selecting flatness measures in practice. Second, the contribution to generalization of combining the zeroth-order and first-order flatness requires a thorough theoretical analysis.

\section*{Acknowledgement}
This work was supported in part by National Key R\&D Program of China (No.
2018AAA0102004, No. 2020AAA0106300), National Natural Science Foundation of China (No.
U1936219, 62141607), Beijing Academy of Artificial Intelligence (BAAI). 

{\small
\bibliographystyle{ieee_fullname}
\bibliography{egbib}
}

\clearpage

\onecolumn
\appendix
\section{Omitted details in Section \ref{sect:method}}
\subsection{Derivation of Equation \eqref{eq:GAM-derivative}} \label{sect:GAM-derivative}
We follow the steps in \cite{foret2021sharpness} to approximate
\begin{equation}
    \nabla \GAM(\bftheta) = \rho \cdot \nabla_{\bftheta} \max_{\bfepsilon \in B(0, \rho)}\left\|\nabla \hat{L}(\bftheta + \bfepsilon)\right\|.
\end{equation}
We first conduct the first-order Taylor expansion of $\|\nabla \hat{L}(\bftheta + \bfepsilon)\|$ and get that
\begin{equation}
    \begin{aligned}
        \bfepsilon^*(\bftheta) & = \argmax_{\bfepsilon \in B(0, \rho)} \left\|\nabla \hat{L}(\bftheta + \bfepsilon)\right\| \approx \argmax_{\bfepsilon \in B(0, \rho)} \left\|\nabla \hat{L}(\bftheta)\right\| + \left(\nabla \left\|\nabla \hat{L}(\bftheta)\right\|\right)^\top \bfepsilon \\
        & = \argmax_{\bfepsilon \in B(0, \rho)} \left(\nabla \left\|\nabla \hat{L}(\bftheta)\right\|\right)^\top \bfepsilon = \frac{\rho \cdot \bff}{\|\bff\|},
    \end{aligned}
\end{equation}
where $\bff = \nabla \|\nabla \hat{L}(\bftheta)\|$.
As a result, by letting $\bftheta^{\text{adv}} = \bftheta + \epsilon^*(\bftheta)$,
\begin{equation}
    \nabla \GAM(\bftheta) \approx \rho \cdot \nabla_{\bftheta} \left\|\nabla \hat{L}(\bftheta + \bfepsilon^*(\bftheta))\right\| = \rho \cdot \nabla \left\|\nabla \hat{L}(\bftheta^{\text{adv}})\right\| + \rho \cdot \frac{\mathrm{d} \bfepsilon^*(\bftheta)}{\mathrm{d} \bftheta} \cdot \nabla  \left\|\nabla \hat{L}(\bftheta^{\text{adv}})\right\|.
\end{equation}
In addition, similar to \cite{foret2021sharpness}, we further drop the second-order term to accelerate the computation. Finally, the derivative $\nabla \GAM(\bftheta)$ is given by
\begin{equation}
    \nabla \GAM(\bftheta) \approx \rho \cdot \nabla \left\|\nabla \hat{L}(\bftheta^{\text{adv}})\right\|, \quad \bftheta^{\text{adv}} = \bftheta + \rho \cdot \frac{\bff}{\|\bff\|}, \quad \bff = \nabla \left\|\nabla \hat{L}(\bftheta)\right\|.
\end{equation}

\section{Proofs}
\subsection{Proof of Lemma \ref{lemma:eigen-radius}}
\begin{proof}
    By assumption, we have that for all $\bftheta \in B(\bftheta^*, \rho)$,
    \begin{equation}
        \hat{L}(\bftheta) = \hat{L}(\bftheta^*) + \frac{1}{2}(\bftheta - \bftheta^*)^\top \left(\nabla^2 \hat{L}(\bftheta^*)\right)(\bftheta - \bftheta^*).
    \end{equation}
    In addition,
    \begin{equation}
        \nabla \hat{L}(\bftheta) = \left(\nabla^2 \hat{L}(\bftheta^*)\right)(\bftheta - \bftheta^*).
    \end{equation}
    As a result,
    \begin{equation}
        \max_{\bftheta \in B(\bftheta^*, \rho)} \left\|\nabla \hat{L}(\bftheta)\right\| = \max_{\bftheta \in B(\bftheta^*, \rho)} \left\|\left(\nabla^2 \hat{L}(\bftheta^*)\right)(\bftheta - \bftheta^*)\right\| = \rho \left\|\nabla^2 \hat{L}(\bftheta^*)\right\| = \rho \lambda_{\max}\left(\nabla^2 \hat{L}(\bftheta^*)\right).
    \end{equation}
    Now the claim follows.
\end{proof}

\subsection{Proof of Proposition \ref{prop:GAM-and-sam}}
\begin{proof}
    Suppose $\bfepsilon^* = \argmax_{\bfepsilon \in B(0,\rho)} \hat{L}(\bftheta + \bfepsilon)$. Then $\sam(\bftheta) = \hat{L}(\bftheta + \bfepsilon^*) - \hat{L}(\bftheta)$. According to the mean value theorem, there exists a constant $0 \le c \le 1$ such that
    \begin{equation}
        \hat{L}(\bftheta + \bfepsilon^*) - \hat{L}(\bftheta) = \left(\nabla \hat{L}(\bftheta + c\cdot \bfepsilon^*)\right)^\top\bfepsilon^*.
    \end{equation}
    As a result, by the Cauchy–Schwarz inequality,
    \begin{equation}
        \begin{aligned}
            \sam(\bftheta) & = \hat{L}(\bftheta + \bfepsilon^*) - \hat{L}(\bftheta) = \left(\nabla \hat{L}(\bftheta + c\cdot \bfepsilon^*)\right)^\top\bfepsilon^* \le \left\|\nabla \hat{L}(\bftheta + c\cdot \bfepsilon^*)\right\|\|\bfepsilon^*\| \\ & \le \max_{\bfepsilon \in B(0, \rho)} \left\|\nabla \hat{L}(\bftheta + \bfepsilon)\right\| \cdot \rho = \GAM(\bftheta).
        \end{aligned}
    \end{equation}
\end{proof}

\subsection{Proof of Proposition \ref{prop:main-bound}}
\begin{proof}
    Define $h(\bftheta) = \max_{\bftheta' \in B(\bftheta, \rho)}\left\|\nabla \hat{L}(\bftheta)\right\|$.
    Fix $\sigma = \rho / (\sqrt{d} + \sqrt{\log n})$, following the proof of Theorem 1 in \cite{foret2021sharpness}, we can obtain that with probability at least $1 - \delta$,
    \begin{small}
        \begin{equation} \label{eq:proof-main-1}
            \bbE_{\epsilon_i \sim N(0, \sigma^2)}\left[L(\bftheta+\bfepsilon)\right] \leq \bbE_{\epsilon_i \sim N(0, \sigma^2)}\left[\hat{L}(\bftheta+\bfepsilon)\right]+\sqrt{\frac{\frac{1}{4} d \log \left(1+\frac{\|\bftheta\|_2^2}{d \sigma^2}\right)+\frac{1}{4}+\log \frac{n}{\delta}+2 \log (6 n+3 d)}{n-1}}.
        \end{equation}
    \end{small}
    Since $\epsilon_i \sim N(0, \sigma^2)$, $\|\bfepsilon\|^2/\sigma^2$ has a chi-square distribution. As a result, according to \cite[Lemma 1]{laurent2000adaptive}, we have that for any $t > 0$,
    \begin{equation}
        \bbP\left(\|\bfepsilon\|^2/\sigma^2 - d \geq 2 \sqrt{ dt}+2 t\right) \leq \exp (-t).
    \end{equation}
    By letting $t = \frac{1}{2}\log n$, we can get that with probability at least $1 - 1 / \sqrt{n}$,
    \begin{equation}
      \|\bfepsilon\|^2 \le \sigma^2\left(d + \sqrt{2d \log n} + \log n\right) \le \sigma^2\left(\sqrt{d} + \sqrt{\log n}\right)^2 = \rho^2.
    \end{equation}
    As a result,
    \begin{equation} \label{eq:proof-main-2}
        \begin{aligned}
            & \, \bbE_{\epsilon_i \sim N(0, \sigma^2)}\left[\hat{L}(\bftheta+\bfepsilon)\right] \\
            \le & \, \bbE_{\epsilon_i \sim N(0, \sigma^2)}\left[\hat{L}(\bftheta+\bfepsilon) \mid \|\bfepsilon\| \le \rho\right] \bbP\left(\|\bfepsilon\| \le \rho\right) + \bbE_{\epsilon_i \sim N(0, \sigma^2)}\left[\hat{L}(\bftheta+\bfepsilon) \mid \|\bfepsilon\| > \rho\right] \bbP\left(\|\bfepsilon\| > \rho\right) \\
            \le & \, \bbE_{\epsilon_i \sim N(0, \sigma^2)}\left[\hat{L}(\bftheta+\bfepsilon) \mid \|\bfepsilon\| \le \rho\right] + \frac{M}{\sqrt{n}}. \\
        \end{aligned}
    \end{equation}
    According to the mean value theorem and Cauchy–Schwarz inequality, for any $\bfepsilon$ such that $\|\bfepsilon\| < \rho$, there exists a constant $0 \le c \le 1$, such that
    \begin{equation} \label{eq:proof-main-3}
        \hat{L}(\bftheta + \bfepsilon) = \hat{L}(\bftheta) + \left(\nabla \hat{L}(\bftheta + c \bfepsilon)\right)^\top\bfepsilon \le \hat{L}(\bftheta) + \left\|\nabla \hat{L}(\bftheta + c \bfepsilon)\right\|\cdot\|\bfepsilon\| \le \hat{L}(\bftheta) + h(\bftheta)\rho = \hat{L}(\bftheta) + \GAM(\bftheta).
    \end{equation}
    Now the claim follows from Equations \eqref{eq:proof-main-1}, \eqref{eq:proof-main-2}, and \eqref{eq:proof-main-3}.
\end{proof}

\subsection{Proof of Theorem \ref{thrm:convergence-main}}
\begin{proof}
    Observe that
    \begin{equation}
        \begin{aligned}
            & \, \left\|\nabla \Loverall(\bftheta_t)\right\|^2 = \left\|\nabla \Loracle(\bftheta_t) + \alpha \rho_t \cdot \nabla \left\|\nabla \hat{L} (\bftheta_t^{\text{adv}})\right\|\right\|^2 \\
            \le & \, 2\left(\left\|\nabla \Loracle(\bftheta_t)\right\|^2 + \left\|\alpha \rho_t \cdot \nabla \left\|\nabla \hat{L} (\bftheta_t^{\text{adv}})\right\|\right\|^2\right).
        \end{aligned}
    \end{equation}
    The claim follows from Propositions \ref{prop:convergence-L} and \ref{prop:convergence-perturbation}.
\end{proof}

\begin{proposition} \label{prop:convergence-L}
    Assume the conditions in Theorem \ref{thrm:convergence-main} hold (with parameters $\gamma_1, \gamma_2, G^{\text{loss}}, G^{\text{norm}}, \tilde{G}^{\text{loss}}, M, \eta_0, \rho_0, \alpha$). Then with learning rate $\eta_t = \eta_0 / \sqrt{t}$ and perturbation radius $\rho_t = \rho_0 / \sqrt{t}$, Algorithm \ref{alg:GAM} could obtain
    \begin{equation}
        \frac{1}{T}\sum_{t=1}^{T} \bbE\left[\left\|\nabla \Loracle(\bftheta_t)\right\|^2\right] \le \frac{C_1' + C_2' \log T}{\sqrt{T}}
    \end{equation}
    for some constants $C_1'$ and $C_2'$ that only depend on $\gamma_1, \gamma_2, G^{\text{loss}}, G^{\text{norm}}, \tilde{G}^{\text{loss}}, M, \eta_0, \rho_0, \alpha$.
\end{proposition}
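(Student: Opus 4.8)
The plan is to run the textbook descent-lemma argument for stochastic gradient methods on the smooth (possibly non-convex) objective $\Loracle$, treating the extra regularizer contribution $\alpha \bfh_t^{\text{norm}}$ as a perturbation whose magnitude vanishes. Write the (stochastic version of the) update of Algorithm \ref{alg:GAM} as $\bftheta_{t+1} = \bftheta_t - \eta_t \bfv_t$ with $\bfv_t = \tilde g_t^{\text{loss}}(\bftheta_t) + \alpha \rho_t\, g_t^{\text{norm}}(\bftheta_t^{\text{adv}})$, and start from $\gamma_1$-Lipschitz smoothness of $\Loracle$:
\[
  \Loracle(\bftheta_{t+1}) \le \Loracle(\bftheta_t) - \eta_t \langle \nabla \Loracle(\bftheta_t), \bfv_t\rangle + \frac{\gamma_1 \eta_t^2}{2}\|\bfv_t\|^2 .
\]

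Next I would take the conditional expectation given the history $\mathcal{F}_t$ up to step $t$. The term $\bbE[\langle \nabla\Loracle(\bftheta_t),\tilde g_t^{\text{loss}}(\bftheta_t)\rangle\mid\mathcal F_t] = \|\nabla\Loracle(\bftheta_t)\|^2$ by conditional unbiasedness of $\tilde g_t^{\text{loss}}$ at the current iterate, supplying the needed negative term $-\eta_t\|\nabla\Loracle(\bftheta_t)\|^2$. The cross term $-\eta_t\alpha\rho_t\langle\nabla\Loracle(\bftheta_t), g_t^{\text{norm}}(\bftheta_t^{\text{adv}})\rangle$ I would bound by Cauchy–Schwarz together with $\|g_t^{\text{norm}}\|\le G^{\text{norm}}$, handling the stray $\|\nabla\Loracle(\bftheta_t)\|$ either by the elementary fact that a $\gamma_1$-smooth function bounded by $M$ has $\|\nabla\Loracle\|\le 2\sqrt{\gamma_1 M}$, or by Young's inequality to split it into $\tfrac{\eta_t}{2}\|\nabla\Loracle(\bftheta_t)\|^2$ (absorbed into the main term) plus an $O(\eta_t\rho_t^2)$ residual. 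For $\|\bfv_t\|^2$ I would use $\|a+b\|^2\le 2\|a\|^2+2\|b\|^2$ with the bounds $\|\tilde g_t^{\text{loss}}\|\le\tilde G^{\text{loss}}$ and $\|g_t^{\text{norm}}\|\le G^{\text{norm}}$, giving $\bbE\|\bfv_t\|^2\le 2(\tilde G^{\text{loss}})^2 + 2\alpha^2\rho_t^2(G^{\text{norm}})^2$. Rearranging produces a per-step inequality of the form
\[
  \frac{\eta_t}{2}\,\bbE\|\nabla\Loracle(\bftheta_t)\|^2 \le \bbE[\Loracle(\bftheta_t)] - \bbE[\Loracle(\bftheta_{t+1})] + A\,\eta_t\rho_t + B\,\eta_t^2 + C\,\eta_t\rho_t^2 ,
\]
with $A,B,C$ depending only on $\gamma_1, G^{\text{norm}}, \tilde G^{\text{loss}}, M, \alpha$.

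Then I would sum over $t=1,\dots,T$ and telescope: $\sum_t(\bbE[\Loracle(\bftheta_t)]-\bbE[\Loracle(\bftheta_{t+1})]) = \Loracle(\bftheta_1)-\bbE[\Loracle(\bftheta_{T+1})]\le 2M$. With $\eta_t=\eta_0/\sqrt t$ and $\rho_t=\rho_0/\sqrt t$ one has $\sum_{t\le T}\eta_t\rho_t \le \eta_0\rho_0(1+\log T)$, $\sum_{t\le T}\eta_t^2\le\eta_0^2(1+\log T)$, and $\sum_{t\le T}\eta_t\rho_t^2 = \eta_0\rho_0^2\sum_{t\le T}t^{-3/2}$ which is bounded by a finite constant. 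On the left, since $\eta_t$ is nonincreasing, $\sum_{t\le T}\eta_t\,\bbE\|\nabla\Loracle(\bftheta_t)\|^2 \ge \eta_T\sum_{t\le T}\bbE\|\nabla\Loracle(\bftheta_t)\|^2 = (\eta_0/\sqrt T)\sum_{t\le T}\bbE\|\nabla\Loracle(\bftheta_t)\|^2$. Combining and dividing by $\tfrac{\eta_0 T}{2\sqrt T}$ yields $\frac{1}{T}\sum_{t=1}^T\bbE\|\nabla\Loracle(\bftheta_t)\|^2 \le (C_1'+C_2'\log T)/\sqrt T$ with $C_1',C_2'$ assembled from $A,B,C,M,\eta_0$. (Note $\gamma_2$ and $G^{\text{loss}}$ are not actually needed here; they enter the companion estimate, Proposition \ref{prop:convergence-perturbation}.)

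The main obstacle — essentially the only conceptual point beyond bookkeeping — is that the iterates do not track an unbiased estimate of $\nabla\Loracle$: the update carries the extra direction $\alpha\rho_t\, g_t^{\text{norm}}(\bftheta_t^{\text{adv}})$, which is not assumed unbiased and need not be a descent direction for $\Loracle$. The resolution is that its norm is $O(\rho_t)=O(1/\sqrt t)\to 0$, so it only introduces a cross term of order $\eta_t\rho_t = O(1/t)$ (hence the unavoidable $\log T$ factor) and a quadratic term of order $\eta_t^2\rho_t^2$ (summable). One should also be careful that conditional unbiasedness of $\tilde g_t^{\text{loss}}$ is invoked at the current iterate $\bftheta_t$, and that $g_t^{\text{norm}}(\bftheta_t^{\text{adv}})$ enters only through its uniform norm bound, so no independence between $\bftheta_t^{\text{adv}}$ and the gradient noise is required.
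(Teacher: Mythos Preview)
Your proposal is correct and follows essentially the same route as the paper's proof: descent lemma from $\gamma_1$-smoothness, conditional unbiasedness of $\tilde g_t^{\text{loss}}$ to extract $-\eta_t\|\nabla\Loracle(\bftheta_t)\|^2$, Cauchy--Schwarz on the cross term with $g_t^{\text{norm}}$, the $\|a+b\|^2\le 2\|a\|^2+2\|b\|^2$ bound on the quadratic term, telescoping with $|\Loracle|\le M$, and lower-bounding the left-hand side via $\eta_t\ge\eta_0/\sqrt T$. The only cosmetic difference is in handling the stray factor $\|\nabla\Loracle(\bftheta_t)\|$ in the cross term: the paper simply uses $\|\nabla\Loracle(\bftheta_t)\|=\|\bbE[\tilde g_t^{\text{loss}}(\bftheta_t)]\|\le\tilde G^{\text{loss}}$, whereas you invoke either $2\sqrt{\gamma_1 M}$ or Young's inequality---all equivalent for the purpose of the bound. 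Your remark that $\gamma_2$ and $G^{\text{loss}}$ are not actually needed for this proposition is accurate.
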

\begin{proof}
    By definition, we have $\bfh_t^{\text{loss}} =  \tilde{g}_t^{\text{loss}}(\bftheta_t)$ and $\bfh_t^{\text{norm}} = g_t^{\text{norm}}(\bftheta_t^{\text{adv}})$. By assumption,
    \begin{equation}
        \begin{aligned}
            \Loracle(\bftheta_{t+1}) & \le \Loracle(\bftheta_{t}) + \left(\nabla \Loracle(\bftheta_{t})\right)^\top(\bftheta_{t+1} - \bftheta_{t}) + \frac{\gamma_1}{2}\left\|\bftheta_{t+1} - \bftheta_{t}\right\|^2 \\
            & = \Loracle(\bftheta_{t}) - \eta_t \left(\nabla \Loracle(\bftheta_{t})\right)^\top\left(\bfh^{\text{loss}}_t + \alpha \rho_t \bfh^{\text{norm}}_t\right) + \frac{\gamma_1\eta_t^2}{2}\left\|\bfh^{\text{loss}}_t + \alpha \rho_t \bfh^{\text{norm}}_t\right\|^2.
        \end{aligned}
    \end{equation}

    Take the expectation conditioned on the observations till timestamp $t$. By the assumption $\bbE[\bfh_t^{\text{loss}}] = \bbE[\tilde{g}_t^{\text{loss}}(\bftheta_{t})] = \nabla \Loracle(\bftheta_{t})$ and $\bbE[\bfh_t^{\text{norm}}] = \bbE[g_t^{\text{norm}}(\bftheta_{t}^{\text{adv}})]$, we can obtain that
    \begin{equation} \label{eq:proof-convergence-loss-1}
        \begin{aligned}
            & \bbE\left[\Loracle(\bftheta_{t+1})\right] - \Loracle(\bftheta_t) \\
            \le & \, - \eta_t \left\|\nabla \Loracle(\bftheta_t)\right\|^2 - \eta_t\rho_t\alpha\left(\nabla \Loracle(\bftheta_t)\right)^\top\bbE\left[g_t^{\text{norm}}(\bftheta_{t}^{\text{adv}})\right] + \frac{\gamma_1\eta_t^2}{2}\left\|\bfh^{\text{loss}}_t + \alpha \rho_t \bfh^{\text{norm}}_t\right\|^2 \\
        \end{aligned}
    \end{equation}
    We have
    \begin{equation} \label{eq:proof-convergence-loss-2}
        - \eta_t\rho_t\alpha\left(\nabla \Loracle(\bftheta_t)\right)^\top\bbE\left[g_t^{\text{norm}}(\bftheta_{t}^{\text{adv}})\right] \le \eta_t\rho_t\alpha\left\|\nabla \Loracle(\bftheta_t)\right\|\left\|\bbE\left[g_t^{\text{norm}}(\bftheta_{t}^{\text{adv}})\right]\right\| \le \eta_t\rho_t\alpha \tilde{G}^{\text{loss}}G^{\text{norm}}.
    \end{equation}
    In addition,
    \begin{equation} \label{eq:proof-convergence-loss-3}
        \bbE\left[\left\|\bfh^{\text{loss}}_t + \alpha \bfh^{\text{norm}}_t\right\|^2\right] \le 2\bbE\left[\left\|\bfh_t^{\text{loss}}\right\|^2\right] + 2\alpha^2 \bbE\left[\left\|\bfh_t^{\text{norm}}\right\|^2\right] \le 2\left(\tilde{G}^{\text{loss}}\right)^2 + 2\alpha^2 \left(G^{\text{norm}}\right)^2.
    \end{equation}
    Combining Equations \eqref{eq:proof-convergence-loss-1}, \eqref{eq:proof-convergence-loss-2}, and \eqref{eq:proof-convergence-loss-3}, we can get that
    \begin{equation}
        \eta_t \left\|\nabla \Loracle(\bftheta_{t})\right\|^2 \le - \bbE\left[\Loracle(\bftheta_{t+1})\right] + \Loracle(\bftheta_{t})  + \eta_t \rho_t Z_1 + \eta_t^2 Z_2
    \end{equation}
    for some constants $Z_1$ and $Z_2$ that only depend on $\gamma_1, \gamma_2, G^{\text{loss}}, G^{\text{norm}}, \tilde{G}^{\text{loss}}, \alpha$. Now perform telescope sum and take the expectations at each step, we can obtain that
    \begin{equation}
        \sum_{t=1}^{T}\eta_{t} \left\|\nabla \Loracle(\bftheta_{t})\right\|^2 \le -\bbE\left[\Loracle(\bftheta_{T+1})\right] + \Loracle(\bftheta_1) + Z_1 \sum_{t=1}^T\eta_t\rho_t + Z_2 \sum_{t=1}^T \eta_t^2.
    \end{equation}
    By letting $\eta_t = \eta_0 / \sqrt{t}$ and $\rho_t = \rho_0 / \sqrt{t}$, we can get that
    \begin{equation}
        \begin{aligned}
            \frac{\eta_0}{\sqrt{T}}\sum_{t=1}^{T}\left\|\nabla \Loracle(\bftheta_{t})\right\|^2 & \le \sum_{t=1}^{T}\eta_{t} \left\|\nabla \Loracle(\bftheta_{t})\right\|^2 \\
            & \le -\bbE\left[\Loracle(\bftheta_{T+1})\right] + \Loracle(\bftheta_1) + Z_1 \sum_{t=1}^T\eta_t\rho_t + Z_2 \sum_{t=1}^T \eta_t^2 \\
            & \le 2M + Z_1\eta_0\rho_0 \sum_{t=1}^T \frac{1}{t} + Z_2\eta_0^2 \sum_{t=1}^T\frac{1}{t} \\
            & \le Z_4 + Z_5 \log T
        \end{aligned}
    \end{equation}
    for some constants $Z_4$ and $Z_5$ that only depend on $\gamma_1, \gamma_2, G^{\text{loss}}, G^{\text{norm}}, \tilde{G}^{\text{loss}}, M, \eta_0, \rho_0, \alpha$. Divide the two sides of the equation by $\eta_0\sqrt{T}$ and the claim follows.
\end{proof}

\begin{proposition} \label{prop:convergence-perturbation}
    Assume the conditions in Theorem \ref{thrm:convergence-main} hold (with parameters $\gamma_1, \gamma_2, G^{\text{loss}}, G^{\text{norm}}, \tilde{G}^{\text{loss}}, M, \eta_0, \rho_0, \alpha$). Then with perturbation radius $\rho_t = \rho_0 / \sqrt{t}$, Algorithm 1 could obtain
    \begin{equation}
        \frac{1}{T}\sum_{t=1}^{T} \bbE\left[\left\|\alpha \rho_t \cdot \nabla \left\|\nabla \hat{L} (\bftheta^{\text{adv}})\right\|\right\|^2\right] \le \frac{C_1'' + C_2'' \log T}{\sqrt{T}}
    \end{equation}
    for some constants $C_1''$ and $C_2''$ that only depend on $\gamma_2, \rho_0, \alpha$.
\end{proposition}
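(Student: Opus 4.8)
The plan is to establish a \emph{deterministic} pointwise bound on the perturbation term and then sum a harmonic-type series; no martingale or variance argument is needed. The one ingredient to extract is a uniform bound on the gradient of the gradient-norm. For any $\bftheta$ with $\nabla\hat L(\bftheta)\neq 0$,
\begin{equation*}
    \nabla\left\|\nabla\hat L(\bftheta)\right\| = \frac{\nabla^2\hat L(\bftheta)\,\nabla\hat L(\bftheta)}{\left\|\nabla\hat L(\bftheta)\right\|},
\end{equation*}
so Cauchy--Schwarz gives $\bigl\|\nabla\|\nabla\hat L(\bftheta)\|\bigr\| \le \|\nabla^2\hat L(\bftheta)\|_{\mathrm{op}}$, and the $\gamma_2$-Lipschitz smoothness of $\hat L$ forces $\|\nabla^2\hat L(\bftheta)\|_{\mathrm{op}}\le\gamma_2$ --- exactly the fact already invoked in the proof of Proposition~\ref{prop:convergence-L}. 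Hence $\bigl\|\nabla\|\nabla\hat L(\bftheta_t^{\mathrm{adv}})\|\bigr\|\le\gamma_2$ on every sample path. (The degenerate case $\nabla\hat L(\bftheta_t^{\mathrm{adv}})=0$ is harmless: the same $\gamma_2$-bound holds for any subgradient and for the $\xi$-regularized quantity actually formed in Algorithm~\ref{alg:GAM}; if $\caltheta\subsetneq\bbR^d$ one reads the smoothness hypothesis globally, as elsewhere in the paper.)

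Given this, I would substitute $\rho_t=\rho_0/\sqrt t$ to obtain, for every $t$ and on every sample path,
\begin{equation*}
    \left\|\alpha\rho_t\cdot\nabla\left\|\nabla\hat L(\bftheta_t^{\mathrm{adv}})\right\|\right\|^2 \;\le\; \alpha^2\rho_t^2\gamma_2^2 \;=\; \frac{\alpha^2\gamma_2^2\rho_0^2}{t}.
\end{equation*}
Taking expectations is then vacuous, and averaging over $t=1,\dots,T$ together with $\sum_{t=1}^T 1/t \le 1+\log T$ yields
\begin{equation*}
    \frac{1}{T}\sum_{t=1}^{T}\bbE\!\left[\left\|\alpha\rho_t\cdot\nabla\left\|\nabla\hat L(\bftheta_t^{\mathrm{adv}})\right\|\right\|^2\right] \;\le\; \frac{\alpha^2\gamma_2^2\rho_0^2\,(1+\log T)}{T} \;\le\; \frac{\alpha^2\gamma_2^2\rho_0^2 + \alpha^2\gamma_2^2\rho_0^2\log T}{\sqrt T},
\end{equation*}
so the claim holds with $C_1''=C_2''=\alpha^2\gamma_2^2\rho_0^2$, depending only on the Lipschitz-smoothness constant of $\hat L$ (written $\gamma_1$ in the statement), $\rho_0$, and $\alpha$.

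There is no genuine obstacle here: the entire content is the uniform bound $\bigl\|\nabla\|\nabla\hat L\|\bigr\|\le\gamma_2$ plus summability of $\rho_t^2$. If one prefers not to differentiate a norm, an equivalent route uses the hypotheses $\bbE[g_t^{\mathrm{norm}}(\bftheta)]=\nabla\|\nabla\hat L(\bftheta)\|$ and $\|g_t^{\mathrm{norm}}(\bftheta)\|\le G^{\mathrm{norm}}$, so that Jensen's inequality gives $\bigl\|\nabla\|\nabla\hat L(\bftheta)\|\bigr\|\le G^{\mathrm{norm}}$ and one simply replaces $\gamma_2$ by $G^{\mathrm{norm}}$. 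I would also remark that the bound just derived in fact decays like $\log T/T$, faster than the stated $\log T/\sqrt T$ envelope; the genuinely rate-determining term in Theorem~\ref{thrm:convergence-main} is the $\sum_t\eta_t^2\sim\log T$ contribution of the stochastic-gradient noise controlled in Proposition~\ref{prop:convergence-L}, and Proposition~\ref{prop:convergence-perturbation} is merely written with the matching $(\log T)/\sqrt T$ form for uniformity.
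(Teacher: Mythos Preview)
Your proposal is correct and follows essentially the same route as the paper: bound $\bigl\|\nabla\|\nabla\hat L\|\bigr\|$ pointwise via the Hessian operator norm (hence the Lipschitz-smoothness constant), multiply by $\alpha^2\rho_t^2$, and sum the resulting harmonic series. You are in fact more careful than the paper's own write-up, which drops the square on the norm midway and uses $\gamma_1$ where $\gamma_2$ is the relevant constant; your remarks on the degenerate case, the alternative $G^{\mathrm{norm}}$ bound, and the true $\log T/T$ decay rate are all apt additions.
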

\begin{proof}
    For any $t \in \{1, 2, \dots, T\}$,
    \begin{equation}
        \begin{aligned}
            & \, \bbE\left[\left\|\alpha \rho_t \cdot \nabla \left\|\nabla \hat{L} (\bftheta^{\text{adv}})\right\|\right\|^2\right] = \alpha^2 \rho_t ^2  \bbE\left[\left\|\nabla \left\|\nabla \hat{L} (\bftheta^{\text{adv}})\right\|\right\|^2\right] \\
            = & \, \alpha^2 \rho_t^2 \bbE\left[\left\|\nabla^2 \hat{L}(\bftheta_t^{\text{adv}}) \cdot \frac{\nabla \hat{L}(\bftheta_t^{\text{adv}})}{\left\|\nabla \hat{L}(\bftheta_t^{\text{adv}})\right\|}\right\|\right] \le \alpha^2 \rho_t^2 \bbE\left[\left\|\nabla^2 \hat{L}(\bftheta_t^{\text{adv}})\right\|\left\| \frac{\nabla \hat{L}(\bftheta_t^{\text{adv}})}{\left\|\nabla \hat{L}(\bftheta_t^{\text{adv}})\right\|}\right\|\right] \\
            \le & \, \alpha^2 \rho_t^2 \bbE[\gamma_2] = \alpha^2 \rho_t^2 \gamma_2.
        \end{aligned}
    \end{equation}
    By letting $\rho_t = \rho_0 / \sqrt{t}$,
    \begin{equation}
        \frac{1}{T}\sum_{t=1}^{T} \bbE\left[\left\|\alpha \rho_t \cdot \nabla \left\|\nabla \hat{L} (\bftheta^{\text{adv}})\right\|\right\|^2\right] \le \frac{1}{T}\alpha^2\gamma_2\rho_0^2\sum_{t=1}^T\frac{1}{t} \le \frac{C_1'' + C_2'' \log T}{\sqrt{T}} 
    \end{equation}
    for some constants $C_1''$ and $C_2''$ that only depend on $\gamma_2, \rho_0, \alpha$.
\end{proof}

\section{More Experimental Results and Details}

\subsection{More Results on Training from Scratch}
Due to space limitation, we omit the experimental results on CIFAR-10 and CIFAR-100 \cite{krizhevsky2009learning} with ResNeXt \cite{xie2017aggregated}, DenseNet \cite{huang2019convolutional} and ViTs \cite{dosovitskiy2020image} in Section 5.2.1 in the main paper and report them in Section \ref{app:exp:cifar}. 
Then we report the results of robustness to label noise in Section \ref{app:exp:noise}.

\subsubsection{CIFAR-10 and CIFAR-100}
\label{app:exp:cifar}

We report the omitted results on CIFAR-10 and CIFAR-100 with ResNeXt, DenseNet and ViTs. 
As described in the main paper, all the models are trained for 200 epochs from scratch. We evaluate GAM both with basic data augmentations (i.e., horizontal flip, padding by four pixels, and random crop) and advanced data augmentation including cutout regularization \cite{devries2017improved}, RandAugment \cite{cubuk2018autoaugment} and AutoAugment \cite{cubuk2020randaugment}. The hyperparameters, $\rho$ and $\alpha$ are searched with the same approach described in the main paper.

\begin{table*}[t]
    \centering
    \small
    \caption{Results of GAM with state-of-the-art models on CIFAR-10 and CIFAR-100. The best results are highlighted in bold font.}
    \resizebox{\linewidth}{!}{
    \begin{tabular}{c|c|cc|cc|cc|cc}
        \toprule
        & & \multicolumn{4}{c|}{CIFAR-10} & \multicolumn{4}{c}{CIFAR-100} \\
        \midrule
        Model & Aug & SGD & SGD + GAM & SAM & SAM + GAM & SGD & SGD + GAM & SAM & SAM + GAM \\
        \midrule
        DenseNet121 & Basic & 91.16$_{\pm 0.13}$ & \textbf{92.35}$_{\pm 0.14}$ & 92.19$_{\pm 0.20}$ & \textbf{92.72}$_{\pm 0.30}$ & 69.25$_{\pm 0.40}$ & \textbf{70.48}$_{\pm 0.27}$ & 70.44$_{\pm 0.19}$ & \textbf{71.16}$_{\pm 0.25}$ \\
        DenseNet121 & Cutout & 91.85$_{\pm0.17}$ & \textbf{92.93}$_{\pm 0.26}$ & 92.35$_{\pm 0.16}$ & \textbf{93.30}$_{\pm 0.17}$ & 70.17$_{\pm 0.31}$ & \textbf{71.47}$_{\pm 0.23}$ & 70.89$_{\pm 0.15}$ & \textbf{71.80}$_{\pm 0.07}$ \\
        DenseNet121 & RA & 91.59$_{\pm0.16}$ & \textbf{92.37}$_{\pm 0.20}$ & 92.32$_{\pm 0.29}$ & \textbf{92.97}$_{\pm 0.27}$ & 69.65$_{\pm 0.36}$ & \textbf{70.10}$_{\pm 0.26}$ & 70.49$_{\pm 0.16}$ & \textbf{71.43}$_{\pm 0.17}$ \\
        DenseNet121 & AA & 92.65$_{\pm0.10}$ & \textbf{94.17}$_{\pm 0.25}$ & 92.96$_{\pm 0.19}$ & \textbf{94.05}$_{\pm 0.22}$ & 70.53$_{\pm 0.17}$ & \textbf{72.25}$_{\pm 0.19}$ & 71.34$_{\pm 0.20}$ & \textbf{72.90}$_{\pm 0.17}$ \\
        \midrule
        ResNeXt29-32x4d & Basic & 95.75$_{\pm0.31}$ & \textbf{96.46}$_{\pm0.25}$ & 96.32$_{\pm0.36 }$ & \textbf{96.90}$_{\pm 0.24}$ & 79.45$_{\pm 0.29}$ & \textbf{81.67}$_{\pm 0.26}$ & 81.35$_{\pm 0.12}$ & \textbf{82.93}$_{\pm 0.25}$ \\
        ResNeXt29-32x4d & Cutout & 96.20$_{\pm0.37}$ & \textbf{97.82}$_{\pm0.24}$ & 96.44$_{\pm0.21 }$ & \textbf{97.85}$_{\pm0.27}$ & 80.56$_{\pm0.20 }$ & \textbf{82.62}$_{\pm0.33 }$ & 82.49$_{\pm0.25 }$ & \textbf{83.58}$_{\pm0.09}$ \\
        ResNeXt29-32x4d & RA & 95.86$_{\pm0.28}$ & \textbf{97.17}$_{\pm 0.26}$ & 96.75$_{\pm 0.35}$ & \textbf{97.79}$_{\pm0.19 }$ & 79.88$_{\pm0.12 }$ & \textbf{81.75}$_{\pm0.23 }$ & 82.26$_{\pm0.15 }$ & \textbf{83.02}$_{\pm0.22 }$ \\
        ResNeXt29-32x4d & AA & 96.58$_{\pm0.18}$ & \textbf{97.46}$_{\pm 0.15}$ & 97.38$_{\pm0.25 }$ & \textbf{97.58}$_{\pm0.16 }$ & 80.47$_{\pm0.13 }$ & \textbf{82.02}$_{\pm0.19 }$ & 81.52$_{\pm0.26 }$ & \textbf{83.35}$_{\pm0.09 }$ \\
        \midrule
        ViT-S/16 & Basic & 95.27$_{\pm0.23}$ & \textbf{97.21}$_{\pm 0.14}$ & 96.85$_{\pm0.25}$ & \textbf{97.58}$_{\pm 0.20}$ &  79.52$_{\pm 0.36}$ & \textbf{83.35}$_{\pm 0.28}$ & 82.77$_{\pm0.29 }$ & \textbf{84.30}$_{\pm0.25}$ \\
        ViT-S/16 & Cutout & 95.36$_{\pm0.22}$ & \textbf{97.53}$_{\pm 0.17}$ & 97.10$_{\pm0.25 }$ & \textbf{97.85}$_{\pm0.10 }$ & 79.36$_{\pm0.21 }$ & \textbf{83.59}$_{\pm0.28 }$ & 82.86$_{\pm0.14 }$ & \textbf{84.53}$_{\pm0.16 }$ \\
        ViT-S/16 & RA & 95.59$_{\pm0.19}$ & \textbf{97.44}$_{\pm 0.31}$ & 97.18$_{\pm0.12 }$ & \textbf{97.59}$_{\pm0.11 }$ & 79.96$_{\pm0.22 }$ & \textbf{83.80}$_{\pm0.20 }$ & 83.36$_{\pm0.13 }$ & \textbf{84.66}$_{\pm0.23 }$ \\
        ViT-S/16 & AA & 96.40$_{\pm0.30}$ & \textbf{97.82}$_{\pm 0.21}$ & 97.52$_{\pm0.25 }$ & \textbf{97.97}$_{\pm0.13 }$ & 80.35$_{\pm0.06 }$ & \textbf{84.02}$_{\pm0.18 }$ & 83.54$_{\pm0.19 }$ & \textbf{85.20}$_{\pm0.26}$ \\ 
        \bottomrule
    \end{tabular}}
    \label{app:tab:cifar}
\end{table*}

Results are shown in Table \ref{app:tab:cifar}. GAM consistently improves generalization for all models. 
We observe the same results as in the main paper.
When combined with SGD, GAM achieves considerably higher test accuracy compared with SGD. And GAM also achieves improvement when combined with SAM. 

\paragraph{Comparison with GNP}
GNP can be considered as a special case of GAM where $\rho$ is set to 0.
We compare GAM with GNP \cite{zhao2022penalizing} in on CIFAR-100 in Table \ref{tab:GNP}. We follow hyperparameters searching and choice of GNP in its original paper. GAM consistently outperforms GNP by noticeable margins. 

\begin{table}[ht]
    \centering
    \caption{Comparison with GNP on CIFAR-100. \textit{-BA} indicates the basic data augmentation, \textit{-CU} indicates cutout regularization, \textit{-RA} indicates RandAugment, and \textit{-AA} indicates AutoAugment.}
    \label{tab:GNP}
    \resizebox{\linewidth}{!}{
    \begin{tabular}{cccccccccc}
        \toprule
        & Res18-BA & Res18-CU & Res18-RA & Res18-AA & Res101-BA & Res101-CU & Res101-RA & Res101-AA  \\
        \midrule
        SGD  & $78.32_{\pm 0.32}$ &
        $78.73_{\pm 0.13}$ & 78.62$_{\pm 0.32}$ & $78.88_{\pm 0.15}$ & $80.47_{\pm 0.13}$ & $80.53_{\pm 0.30}$ & $80.60_{\pm 0.28}$ & $81.83_{\pm 0.37}$\\
        
        GNP + SGD & $78.80_{\pm 0.40}$ & $79.29_{\pm 0.15}$ & $79.21_{\pm 0.27}$ & $80.29_{\pm 0.05}$ & $81.17_{\pm 0.29}$ & $81.10_{\pm 0.14}$ & $81.31_{\pm 0.88}$ & $82.53_{\pm 0.25}$ \\
        
        GAM + SGD  & \textbf{79.53$_{\pm 0.30}$ }& \textbf{79.89$_{\pm 0.31}$} & \textbf{79.82$_{\pm 0.24}$} & \textbf{80.56$_{\pm 0.21}$} & \textbf{82.21$_{\pm 0.40}$} & \textbf{82.36$_{\pm 0.24}$} & \textbf{82.40$_{\pm 0.31}$} & \textbf{83.19$_{\pm 0.15}$} \\
        \midrule
        & WRN10-BA & WRN10-CU & WRN10-RA & WRN10-AA & Pyr110-BA & Pyr110-CU & Pyr110-RA & Pyr110-AA  \\
        \midrule
        SGD  & $81.40_{\pm 0.13}$ & $81.53_{\pm 0.40}$ & $81.65_{\pm 0.18}$ & $81.99_{\pm 0.11}$ & $82.74_{\pm 0.12}$ & $83.31_{\pm 0.21}$ & $84.04_{\pm 0.19}$ & $84.48_{\pm 0.03}$\\
        
        GNP + SGD & $82.30_{\pm 0.05}$ & $82.54_{\pm 0.19}$ & $82.99_{\pm 0.39}$ & $83.58_{\pm 0.32}$ & $83.99_{\pm 0.27}$ & $84.46_{\pm 0.16}$ & $84.47_{\pm 0.08}$ & $84.83_{\pm 0.21}$ \\
        
        GAM + SGD  & \textbf{83.45$_{\pm 0.09}$} & \textbf{83.69$_{\pm 0.08}$} & \textbf{83.84$_{\pm 0.09}$} & \textbf{84.02$_{\pm 0.18}$} & \textbf{84.91$_{\pm 0.09}$} & \textbf{85.20$_{\pm 0.19}$} & \textbf{86.47$_{\pm 0.14}$} & \textbf{85.92$_{\pm 0.03}$} \\
        \bottomrule
    \end{tabular}}
    \label{tab:domainnet}
\end{table}

\subsection{Robustness to Label Noise}
\label{app:exp:noise}
It is observed that sharpness-aware minimization methods are robust to perturbations to label noise \cite{foret2021sharpness,kwon2021asam}. Here we assess the degree of robustness that GAM provides to label noise. 

Following \cite{foret2021sharpness,kwon2021asam}, we measure the effectiveness of GAM in the classical noisy-label setting for CIFAR-10. A fraction of the training data labels is randomly flipped \cite{jiang2020beyond} while the test data remains unmodified. We train a ResNet32 for 200 epochs following \cite{jiang2020beyond}. Hyperparameter settings for all the models are the same as that of previous CIFAR experiments. Following \cite{arazo2019unsupervised,kwon2021asam}, we report the best results during the training instead of the results at the end of the training.

We report test accuracies for SGD, SAM, SGD+GAM, and SAM+GAM obtained from 3 independent runs for each label noise level in Table \ref{app:tab:noise}.
As seen in Table \ref{app:tab:noise}, GAM shows a high degree of robustness to label noise. GAM consistently improves the robustness to label noise for both SGD and SAM. 

\begin{table*}[t]
    \centering
    \small
    \caption{Test accuracy of ResNet32 on CIFAR-10 with label noise.}
    \begin{tabular}{c|cc|cc}
        \toprule
         Noise Rate (\%) & Base Opt & Base + GAM & SAM & SAM + GAM \\
        \midrule
        0\% & 95.71 & \textbf{96.55} & 96.25 & \textbf{96.88} \\
        20\% & 92.05 & \textbf{94.20} & 93.85 & \textbf{94.74} \\
        40\% & 88.89 & \textbf{92.17} & 90.85 & \textbf{92.57} \\
        60\% & 83.17 & \textbf{88.49} & 87.37 & \textbf{89.65} \\
        80\% & 63.16 & \textbf{73.82} & 70.65 & \textbf{76.33} \\
        \bottomrule
    \end{tabular}
    \label{app:tab:noise}
\end{table*}

\subsection{Detailed Results and Discussions about Computation Overhead}
\begin{table*}[t]
    \centering
    \caption{Accuracy and training speed of training with different ratios of iterations using GAM. Superscripts indicate the ratio of iterations in each epoch is trained with GAM (e.g., GAM$^{0.05}$ indicates that 5\% of iterations are trained with GAM, while the remaining iterations are trained with the basic optimizer). Numbers in parentheses indicate the ratio of the training speed compared with the vanilla base optimizer SGD/SAM. We mark runs whose training speed is lower than 50\% of the basic optimizer in red and others in green. Please note that the speed of SAM is about 50\% w.r.t SGD's speed. Thus when combined with SGD, green markers indicate that the speed of GAM under the corresponding ratio is faster than SAM.} 
    \resizebox{0.9\linewidth}{!}{
    \begin{tabular}{c|c|ccccc}
        \toprule
        \multirow{7}{*}{CIFAR-10} & & SGD &  SGD + GAM$^{0.05}$ & SGD + GAM$^{0.1}$ & SGD + GAM$^{0.5}$ & SGD + GAM$^{1}$ \\
        \cmidrule{2-7}
        & Accuracy & 95.32 & 96.08 & 96.15 & 96.17 & 96.17 \\
        & Images/s & 2,593 (\color{teal}{100\%}) & 2,258 (\color{teal}{87\%}) & 1,996 (\color{teal}{77\%})  & 1,023 (\color{BrickRed}{39\%}) & 658 (\color{BrickRed}{25\%}) \\
        \cmidrule{2-7}
        & & SAM & SAM + GAM$^{0.05}$ & SAM + GAM$^{0.1}$ & SAM + GAM$^{0.5}$ & SAM + GAM$^{1}$ \\
        \cmidrule{2-7}
        & Accuracy & 96.10 & 96.54 & 96.62 & 96.65 & 96.58 \\
        & Images/s & 1,314 (\color{teal}{100\%}) & 1,247 (\color{teal}{95\%}) & 1,184 (\color{teal}{90\%}) & 858 (\color{teal}{65\%}) & 629 (\color{BrickRed}{48\%}) \\
        \midrule
        \multirow{7}{*}{CIFAR-100} & & SGD & SGD + GAM$^{0.05}$ & SGD + GAM$^{0.1}$ & SGD + GAM$^{0.5}$ & SGD + GAM$^{1}$ \\
        \cmidrule{2-7}
        & Accuracy & 78.32 & 79.25 & 79.42 & 79.50 & 79.53 \\
        & Images/s & 2,609 (\color{teal}{100\%}) & 2,243 (\color{teal}{86\%}) & 1,955 (\color{teal}{75\%}) & 1,011 (\color{BrickRed}{39\%}) & 655 (\color{BrickRed}{25\%}) \\
        \cmidrule{2-7}
        & & SAM & SAM + GAM$^{0.05}$ & SAM + GAM$^{0.1}$ & SAM + GAM$^{0.5}$ & SAM + GAM$^{1}$ \\
        \cmidrule{2-7}
        & Accuracy & 79.27 & 80.08 & 80.44 & 80.40 & 80.45 \\
        & Images/s & 1,318 (\color{teal}{100\%}) & 1,251 (\color{teal}{95\%}) & 1,172 (\color{teal}{89\%}) & 848 (\color{teal}{64\%}) & 628 (\color{BrickRed}{48\%})\\
        \bottomrule
    \end{tabular}}
    \label{app:tab:time}
\end{table*}

Here we report the detailed results of the trade-off between computation overhead and test accuracy of GAM. As discussed in Section 4.3 and 5.5 in the main paper, the GAM term can be easily calculated via the Hessian vector product, which is an efficient approach to calculating the dot product between the Hessian and a vector without the need to calculate the entire Hessian. And we also notice that only several iterations of learning with GAM (with higher $\alpha$ compared with applying GAM to all iterations) improve model generalization considerably. As seen in Table \ref{app:tab:time}, applying GAM to 1/10 iterations of training shows similar generalization performance to applying GAM to all the iterations, while the extra computational cost for GAM is less than 25\% of the original cost. Thus the computation overhead of GAM can be easily controlled.

\subsection{Ablation Study}
GAM has two hyperparameters, $\rho$, and $\alpha$. We analyze the influence of the choice of them in the following subsection \ref{sec:ablation1} and \ref{sec:ablation2}. The results are shown in Figure \ref{fig:ablation-hyperparameter}.

\begin{figure}
    \centering
    \begin{subfigure}[b]{0.49\textwidth}
        \centering
        \includegraphics[width=0.7\linewidth]{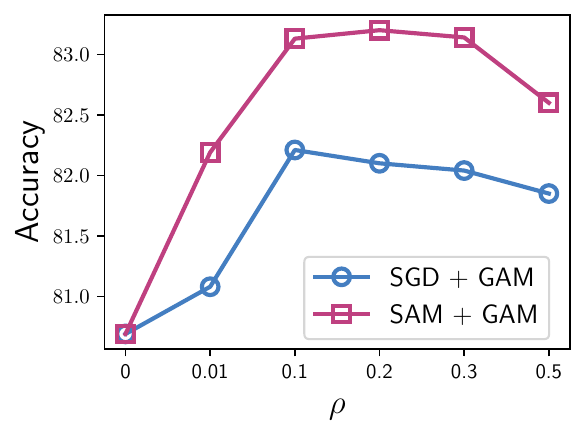}
        \caption{}
        \label{fig:hyper-rho}
    \end{subfigure}
    \hfill
    \begin{subfigure}[b]{0.49\textwidth}
        \centering
        \includegraphics[width=0.7\linewidth]{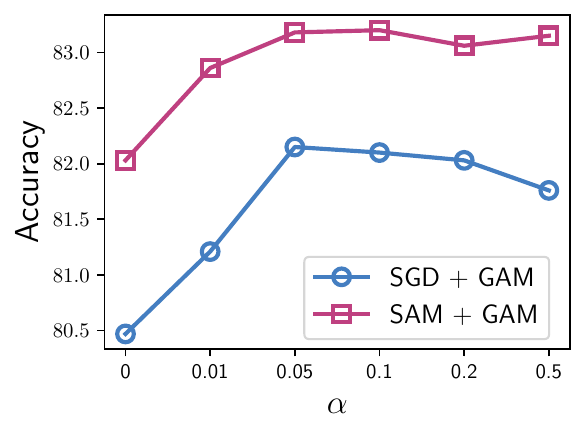}
        \caption{}
        \label{fig:hyper-alpha}
    \end{subfigure}
    \hfill
    \caption{The influence of hyperparameters $\rho$ and $\alpha$ on the performance of ResNet101 on CIFAR-100.}
    \label{fig:ablation-hyperparameter}
\end{figure}

\subsubsection{Influence of $\rho$}
\label{sec:ablation1}
$\rho$ controls the step length of gradient ascent in GAM. When $\rho$ is set to 0, GAM degenerates into a naive regularizer constraining the gradient norm at each training step.  
We plot the performance of ResNet101 on CIFAR-100 with varying $\rho$. For experiments where GAM is combined with SAM, we apply the same $\rho$ for both GAM and SAM. As shown in Figure \ref{fig:hyper-rho}, GAM with $\rho$ larger than 0 outperforms GAM without gradient ascent, showing that the gradient ascent is necessary for GAM. 
Moreover, GAM consistently outperforms SGD with different $\rho$. 
GAM also consistently improves SAM's performance with various $\rho$, indicating that the first-order flatness can improve the generalization ability of zero-order flatness.

\subsubsection{Influence of $\alpha$}
\label{sec:ablation2}
$\alpha$ controls the strength of GAM penalty. When $\alpha$ is set to 0 GAM degenerates into the basic optimizer (SGD or SAM).
We show the performance of GAM with varying $\alpha$ in Figure \ref{fig:hyper-alpha}. Compared with SGD, GAM shows considerable improvement with varying $\alpha$. The improvement of GAM under various $\alpha$ is also observed when combined with SAM.

\subsection{Training Details and Selection of Hyperparameters}
\subsubsection{Training Details of Training from Scratch Experiments}
\begin{table}
\centering
\caption{Hyperparameters for Algorithm \ref{alg:GAM} on CIFAR-10 and CIFAR-100 datasets.}
\resizebox{0.7\linewidth}{!}{
\begin{tabular}{l|ccccc}
\toprule
Model  & Learning Rate & Weight Decay & Base Optimizer & Epochs & LR Schedule \\
\midrule
ResNet18  & 0.1 & 0.0005 & SGD & 200 & Cosine \\
ResNet101   & 0.1 & 0.0005 & SGD & 200 & Cosine \\
WRN28\_2   & 0.1 & 0.0005 & SGD &  200 & Cosine \\
WRN28\_10   & 0.1 & 0.0005 & SGD &  200 & Cosine  \\
PyramidNet110   & 0.05 & 0.0005 & SGD  &  200 & Cosine  \\
DenseNet121   & 0.1 & 0.001 & SGD  & 200 & Cosine \\
ResNeXt29-32x4d  & 0.1 & 0.0005 & SGD &  200 & Cosine \\

\bottomrule

\end{tabular}
}
\label{app:tab:hp1}
\end{table}

\begin{table}
\centering
\caption{Hyperparameters for Algorithm \ref{alg:GAM} on ImageNet.}
\begin{tabular}{l|ccccccc}
\toprule
Model & $\rho$ & $\alpha$ & Learning Rate & Weight Decay & Base Optimizer & Epochs & LR Schedule \\
\midrule
ResNet50 & 0.2 & 0.1 & 0.1 & 0.0001 & SGD & 90 & Cosine \\
ResNet101 &  0.2 & 0.1  & 0.1 & 0.0001 & SGD & 90 & Cosine \\
\midrule
ViT-S/32 & 0.3 & 0.5 & 0.0003 & 0.3 & AdamW & 300 & Cosine \\
ViT-B/32 & 0.3 & 0.5 & 0.0003 & 0.3 & AdamW  & 300 & Cosine \\
\bottomrule
\end{tabular}
\label{app:tab:hp2}
\end{table}

We search hyperparameters, including learning rate and weight decay for all the models unless otherwise noted. 
For ResNets, we conduct a grid search of learning rate in \{0.01, 0.1, 1.0\} and weight decay in \{0.0001, 0.0005, 0.001, 0.01, 0.1\}. The batch size is set to 128 for all models.
For Vits, we search the learning rate in \{1e-3, 3e-3, 1e-2, 3e-3\}, and weight decay in \{0.001, 0.01, 0.1\}. We adopt SGD with momentum = 0.9 for ResNets and AdamW with $\beta_1$ = 0.9, $\beta_2$ = 0.999 for ViTs. We train ResNets for 90 epochs, and train ViTs for 300 epochs following \cite{chen2021vision,zhuang2022surrogate}. We first search for the optimal learning rate and weight decay for training with basic optimizers and keep them fixed for SAM and GAM.
We search $\rho$ in \{0.05, 0.1, 0.2, 0.5, 1.0, 2.0\} for both SAM and GAM and search $\alpha$ in \{0.1, 0.2, 0.5, 1.0, 2.0, 3.0, ..., 10.0\} for GAM. 
We set $\rho$ to 0.04 for CIFAR-10 and 0.1 for CIFAR-100. We set $\alpha$ to 0.3 for ResNet-18 and 0.1 for other models.
We report the best selection of hyperparameters for each individual model in Table \ref{app:tab:hp1} and Table \ref{app:tab:hp2}.

\subsubsection{Training Details of Transfer Learning Experiments}
We finetune the models on downstream datasets including Stanford Cars \cite{krause20133d}, CIFAR-10, CIFAR-100 \cite{krizhevsky2009learning}, Oxford\_IIIT\_Pets \cite{parkhi2012cats} and Food101 \cite{bossard2014food} from the weights pretrained on ImageNet. For EfficientNet-b0, we adopt SGD with momentum = 0.9. For Swin-t, we adopt AdamW with $\beta_1$ = 0.9, $\beta_2$ = 0.999. We train all the models for 40k steps and the batch size is 128. The initial learning rate is 2e-3 and the cosine learning rate decay is used. Weight decay is set to 1e-5. 

\section{Further Acceleration of GAM}
\paragraph{Acceleration} Optimizing the gradient of $\GAM_{\rho}(\bftheta)$ according to Equations \eqref{eq:GAM-derivative} and \eqref{eq:gradient-norm} requires the Hessian vector product operation, which can still introduce considerable extra computation when the model is large. Inspired by \cite{singh2020woodfisher,zhao2022penalizing},
one can approximate $\nabla\|\nabla \hat{L}(\bftheta)\|$ with first-order gradient as follows.
\begin{equation} \label{eq:hessian-approx}
    \forall \bftheta \in \caltheta, \quad \nabla\left\|\nabla \hat{L}(\bftheta)\right\| \approx \frac{\nabla \hat{L}\left(\bftheta + \rho' \cdot \frac{\nabla \hat{L}(\bftheta)}{\|\nabla \hat{L}(\bftheta)\|}\right) - \nabla \hat{L}(\bftheta)}{\rho'},
\end{equation}
where $\rho'$ is a small constant. Thus we can further accelerate GAM by applying Equation \eqref{eq:hessian-approx} to the $\bftheta^{\text{adv}}$ term in Equation \eqref{eq:GAM-derivative} as follows,
\begin{equation}
    \bftheta^{\text{adv}} \approx \bftheta + \rho \cdot \frac{\nabla \hat{L}\left(\bftheta + \rho' \cdot \frac{\nabla \hat{L}(\bftheta)}{\|\nabla \hat{L}(\bftheta)\|}\right) - \nabla \hat{L}(\bftheta)}{\left\|\nabla \hat{L}\left(\bftheta + \rho' \cdot \frac{\nabla \hat{L}(\bftheta)}{\|\nabla \hat{L}(\bftheta)\|}\right) - \nabla \hat{L}(\bftheta)\right\|} = \bftheta + \rho \cdot \frac{\bfg_{1} - \bfg_{0}}{\left\|\bfg_{1} - \bfg_{0}\right\|},
\end{equation}
where
\begin{equation}
    \bfg_0 = \nabla \hat{L}(\bftheta), \quad \bfg_1 = \nabla \hat{L}(\tilde{\bftheta}_1), \quad \text{and} \quad \tilde{\bftheta}_1 = \bftheta + \rho' \cdot \frac{\nabla \hat{L}(\bftheta)}{\|\nabla \hat{L}(\bftheta)\|} = \bftheta + \rho' \cdot \frac{\bfg_0}{\|\bfg_0\|}. 
\end{equation}
We let $\tilde{\bftheta}_{2} \triangleq \bftheta^{\text{adv}}$. Now applying Equation \eqref{eq:hessian-approx} to the calculation of $\nabla \|\nabla \hat{L}(\tilde{\bftheta}_{2})\|$, we can get that
\begin{equation}
    \begin{aligned}
        \nabla \GAM_{\rho}(\bftheta) \approx \rho \cdot \nabla \left\|\nabla \hat{L}(\tilde{\bftheta}_{2})\right\| & \approx \rho \cdot \frac{\nabla \hat{L}\left(\tilde{\bftheta}_{2}  + \rho' \cdot \frac{\nabla \hat{L}(\tilde{\bftheta}_{2} )}{\|\nabla \hat{L}(\tilde{\bftheta}_{2} )\|}\right) - \nabla \hat{L}(\tilde{\bftheta}_{2})}{\rho'} \\
        & = \frac{\rho}{\rho'}\left(\bfg_{3} - \bfg_{2}\right),
    \end{aligned}
\end{equation}
where
\begin{equation}
    \bfg_{2} = \nabla \hat{L}(\tilde{\bftheta}_{2}),\quad \bfg_{3} = \nabla \hat{L}(\tilde{\bftheta}_{3}), \quad \text{and} \quad \tilde{\bftheta}_{3} = \tilde{\bftheta}_{2}  + \rho' \cdot \frac{\nabla \hat{L}(\tilde{\bftheta}_{2} )}{\|\nabla \hat{L}(\tilde{\bftheta}_{2} )\|} = \tilde{\bftheta}_{2}  + \rho' \cdot \frac{\bfg_{2}}{\|\bfg_{2}\|}.
\end{equation}

\paragraph{Accelerated GAM}
Based on the above approximations, we could obtain the accelerated version of GAM as shown in Algorithm \ref{alg:GAM_acceleration}. Besides them, the following modifications can further accelerate and suppress the effects of the above approximation.

\begin{algorithm}[t]
\caption{Accelerated GAM}
\label{alg:GAM_acceleration}
\begin{algorithmic}[1]
    \State \textbf{Input:} Batch size $b$, Learning rate $\eta_t$, Perturbation radius $\rho_t, \rho_t'$, Trade-off coefficient $\alpha, \beta, \gamma$, Small constant $\xi$
    \State $t \leftarrow 0$, $\bftheta_0 \leftarrow$ initial parameters
    \While{$\bftheta_t$ not converged}
        \State Sample $W_t$ from the training data with $b$ instances
        \State $\bfg_{t,0} \leftarrow \nabla \hat{L}_{W_t}(\bftheta_t)$
        \State $\tilde{\bftheta}_{t,1} \leftarrow \bftheta_t + \rho_t' \cdot \bfg_{t,0} / (\|\bfg_{t,0}\|+\xi)$
        \State $\bfg_{t,1} \leftarrow \nabla \hat{L}_{W_t}(\tilde{\bftheta}_{t,1})$
        \State $\bfh_{t,0} \leftarrow \bfg_{t,1} - \bfg_{t,0}$
        \State $\tilde{\bftheta}_{t,2} \leftarrow \bftheta_t + \rho_t \cdot \bfh_{t,0} / (\|\bfh_{t,0}\|+\xi)$
        \State $\bfg_{t,2} \leftarrow \nabla \hat{L}_{W_t}(\tilde{\bftheta}_{t,2})$
        \State $\tilde{\bftheta}_{t,3} \leftarrow \tilde{\bftheta}_{t,2} + \rho_t' \cdot \bfg_{t,2} / (\|\bfg_{t,2}\|+\xi)$
        \State $\bfg_{t,3} \leftarrow \nabla \hat{L}_{W_t}(\tilde{\bftheta}_{t,3})$
        \State $\bfh_{t, +} \leftarrow \alpha \bfg_{t,1} + (1 - \alpha)\bfg_{t,3}$
        \State $\bfh_{t, -} \leftarrow \beta \bfg_{t,0} + (1 - \beta)\bfg_{t,2}$
        \State $\bfh_{t, -}^{\parallel}, \bfh_{t, -}^{\perp} \leftarrow \mathrm{decompose}(\bfh_{t,-}; \bfh_{t, +})$ \Comment{Decompose $\bfh_{t,-}$ into components that are parallel or orthogonal to $\bfh_{t,+}$}
        \State $\bftheta_{t+1} \leftarrow \bftheta_{t} - \eta_t\left(\bfh_{t,+} - \gamma\bfh_{t, -}^{\perp}\right)$ 
        \State $t \leftarrow t + 1$
    \EndWhile
    \State \Return{$\bftheta_t$}
\end{algorithmic}
\end{algorithm}

\begin{enumerate}
    \item We find that the gradient of the SAM regularization term, $\bfg_{1} - \bfg_{0}$, is already calculated during the approximation steps. As a result, we directly optimize the target $\hat{L}(\bftheta) + \alpha'\GAM_{\rho}(\bftheta) + \beta' \sam_{\rho'}(\bftheta)$ with hyper-parameter $\alpha', \beta'$. The gradient of the target can be approximated as
    \begin{equation}
        \nabla \left(\hat{L}(\bftheta) + \alpha'\GAM_{\rho}(\bftheta) + \beta' \sam_{\rho'}(\bftheta)\right) \approx \bfg_0 + \frac{\alpha'\rho}{\rho'}(\bfg_3 - \bfg_2) + \beta'(\bfg_1 - \bfg_0) = \beta'\bfg_1 + \frac{\alpha'\rho}{\rho'}\bfg_3 - (\beta' - 1)\bfg_0 - \frac{\alpha'\rho}{\rho'}\bfg_2,
    \end{equation}
    which means that the gradient of our target is a linear combination of $\bfg_0, \bfg_1, \bfg_2, \bfg_3$. We further set three hyper-parameters to control the importance of these parts and preserve the signs of different parts, \textit{i.e.},
    \begin{equation} \label{eq:gradient-gam-acc-approximation}
        \nabla \left(\hat{L}(\bftheta) + \alpha'\GAM_{\rho}(\bftheta) + \beta' \sam_{\rho'}(\bftheta)\right) \approx \alpha \bfg_1 + (1 - \alpha)\bfg_3 - \gamma(\beta \bfg_0 + (1 - \beta) \bfg_2)
    \end{equation}
    with $1 \ge \alpha, \beta \ge 0, \gamma \ge 0$.
    \item We find that the negative parts of Equation \eqref{eq:gradient-gam-acc-approximation} may have side effects on the model's convergence and performance, and thus require fine-tuning of hyperparameters. 
    Inspired by \cite{xiong2022grod,zhuang2022surrogate}, we decompose $\beta \bfg_0 + (1 - \beta) \bfg_2$ into components that are parallel and orthogonal to $\alpha \bfg_1 + (1 - \alpha)\bfg_3$. Specifically, we first let
    \begin{equation}
        \bfh_+ = \alpha \bfg_1 + (1 - \alpha)\bfg_3, \quad \bfh_- = \beta \bfg_0 + (1 - \beta) \bfg_2
    \end{equation}
    to denote the positive and negative parts in Equation \eqref{eq:gradient-gam-acc-approximation}, respectively. We then decompose the negative part $\bfh_-$ into two components that are parallel or orthogonal to $\bfh_+$ and we get $\bfh_{-}^{\parallel}$ and $\bfh_{-}^{\perp}$. As a result, the final gradient is given by $\bfh_+ - \gamma\bfh_{-}^{\perp}$.
\end{enumerate}

\end{document}